\documentclass{article}

\usepackage[preprint]{corl_2026}
\usepackage{silence}
\usepackage{booktabs}
\usepackage{tabularx}
\usepackage[most]{tcolorbox}

\tcbset{
    rewardbox/.style={
        colback=gray!3,
        colframe=gray!35,
        boxrule=0.4pt,
        arc=2mm,
        left=1.5mm,
        right=1.5mm,
        top=1mm,
        bottom=1mm,
        before skip=0.8em,
        after skip=0.8em
    }
}
\usepackage[utf8]{inputenc} %
\usepackage[T1]{fontenc}    %

\usepackage{url}            %
\usepackage{booktabs}       %
\usepackage{amsfonts}       %
\usepackage{nicefrac}       %
\usepackage{microtype}      %
\usepackage{xcolor}         %
\usepackage{graphicx}
\usepackage{adjustbox}
\usepackage{multirow}
\usepackage{rotating}

\usepackage{microtype}
\usepackage{graphicx}
\usepackage{subfigure}
\usepackage{booktabs} %

\usepackage{amsmath}
\usepackage{amssymb}
\usepackage{mathtools}
\usepackage{amsthm}

\newtheorem{proposition}{Proposition}[section]

\newtheorem{definition}{Definition}[section]
\newtheorem{assumption}{Assumption}[section]
\theoremstyle{remark}

\usepackage{url}
\usepackage{enumitem}
\usepackage{graphicx}
\usepackage{xcolor}
\usepackage{soul}  %
\usepackage{color, soul}
\usepackage{colortbl}
\usepackage{wrapfig}
\usepackage{algorithm}
\usepackage{multirow}
\usepackage{multicol}

\usepackage{lipsum} %

\usepackage{longtable}
\usepackage{listings}

\usepackage[toc,page,header]{appendix}
\usepackage{minitoc}

\usepackage[utf8]{inputenc} %
\usepackage[T1]{fontenc}    %
\usepackage[table]{xcolor}  %
\usepackage{hyperref}       %
\usepackage{url}            %
\usepackage{booktabs}       %
\usepackage{amsfonts}       %
\usepackage{amsmath}
\usepackage{nicefrac}       %
\usepackage{microtype}      %
\usepackage{multirow}
\usepackage{miscs/customs}
\usepackage{graphicx}
\usepackage{algorithm}
\usepackage{algpseudocode}
\usepackage{enumitem}
\usepackage{wrapfig}
\usepackage{caption}
\usepackage{needspace}
\usepackage{wrapfig}
\usepackage{makecell}
\usepackage[table]{xcolor}
\usepackage{makecell}
\usepackage{array}
\usepackage{placeins}
\definecolor{codegreen}{RGB}{226,242,226}

\usepackage[toc,page,header]{appendix}
\usepackage{minitoc}

\title{Support-Constrained RL Enables Real-World Policy Improvement without Real-World Experience}

\doparttoc

\author{%
  Raymond Yu\thanks{Equal contribution.} \\
  University of Washington
  \And
  William Huey\footnotemark[1] \\
  University of Washington
  \And
  Mustafa Mukadam \\
  University of Washington
  \And
  Anusha Nagabandi \\
  Amazon FAR
  \And
  Abhishek Gupta \\
  University of Washington
}

\hypersetup{
  pdftitle={Support-Constrained RL Enables Real-World Policy Improvement without Real-World Experience},
  pdfauthor={Raymond Yu, Will Huey, Mustafa Mukadam, Anusha Nagabandi, Abhishek Gupta},
  pdfsubject={},
}

\begin{document}
\begingroup
  \makeatletter
  \long\def\contentsline#1#2#3#4{}
  \def\tableofcontents{\@starttoc{toc}}
  \makeatother
  \tableofcontents
\endgroup

\maketitle

\begin{abstract}
Robots trained on real world data tend to be imprecise, slow, and brittle to perturbations. Improving these policies with reinforcement learning (RL) is an appealing alternative, but this process often requires expensive training in the real world. Performing policy improvement in simulation instead provides a far cheaper alternative, but unconstrained RL in simulation can exploit contact and dynamics mismatches, resulting in unsafe behaviors that do not transfer to hardware. Common forms of regularization can furthermore limit improvement by overconstraining to an imperfect behavior prior.  In this work, we propose Support-Constrained Off-Domain REinforcement (\texttt{SCORE}), a real-to-sim-to-real framework that constrains RL in simulation to the \textit{support} of a generative policy pretrained on real data. We instantiate this constraint through \textit{flow steering}, restricting \texttt{SCORE} to actions the base policy can already produce, which ensures transferable behaviors while maximizing policy improvement. Improving a policy with \texttt{SCORE} requires minimal effort: it learns from sparse rewards, avoids distillation, and leaves the base policy untouched. Across eight real-world dexterous multi-fingered robotic manipulation tasks, \texttt{SCORE} improves average success rate from $37.8\%$ to $89.9\%$, compared to $59.5\%$ for the best baseline, and reaches success in $36.8\%$ fewer steps than the base policy. Ultimately, through extensive experiments and ablations, we show that simulation can substantially improve real-world manipulation policies when policy optimization is appropriately constrained, introducing a new paradigm for real-to-sim-to-real policy improvement. Videos and code are available at \url{https://weirdlabuw.github.io/score/}.

\end{abstract}
\keywords{Reinforcement Learning, Sim-to-Real, Dexterous Manipulation}

\section{Introduction}
Real-world robot policies learned via imitation learning are rarely optimal zero-shot. Even when they can complete a task, they are often slow, imprecise, or brittle to perturbations. A natural next step is to improve these policies through interaction. In principle, a robot could repeatedly attempt a task, observe failures, and refine its behavior using reinforcement learning (RL). In practice, doing this directly on hardware is expensive, unsafe, and difficult to scale.

The difficulty of real-world improvement suggests the need for a different source of interaction. An appealing alternative to real-world RL is to improve the policy using simulation, which provides safe, massively parallel interaction, privileged state, and many opportunities for failure and recovery that are difficult to obtain on hardware \cite{yin2026emergentdexteritydiverseresets}. However, policy improvement without constraints in simulation can exploit discrepancies in contacts, object motion, or low-level control, achieving high simulated reward but failing on hardware deployment \cite{aljalbout2025realitygaproboticschallenges}. 

\begin{figure*}
    \centering

    \makebox[\textwidth][c]{%
        \includegraphics[
            width=1.0\textwidth
        ]{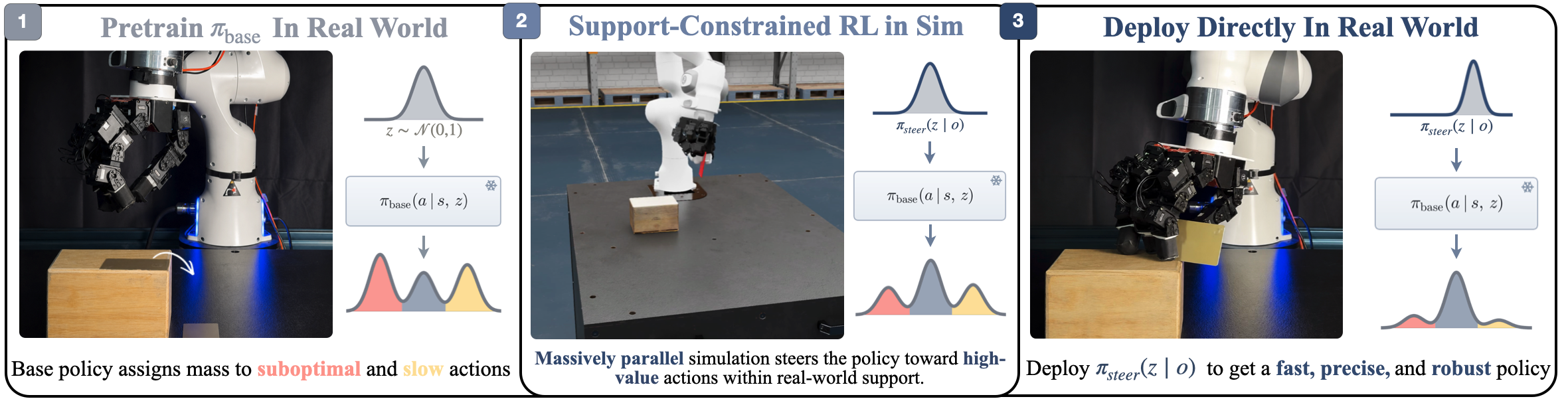}
    }
    \caption{\footnotesize{\textbf{\ours{} framework.} \ours{} starts from any real-world flow matching policy, which may have been trained on successes, play data, failures, and retry behaviors. The flow policy is brought into simulation, where \ours{} learns to improve the policy using flow steering, a support-constrained RL algorithm. Finally, our training framework enables direct deployment of the steering policy in the real world, preserving the base policy and avoiding complicated distillation pipelines.}}
    \label{fig:method}
    \vspace{-1.6em}
\end{figure*}

To ensure transferability of simulation policies to the real world, a common strategy constrains policy improvement with a distributional distance. For example, a behavior cloning (BC) loss can be added during policy gradient updates to penalize the KL divergence from the base policy~\cite{torne2024reconcilingrealitysimulationrealtosimtoreal}. While such constraints can improve transfer, they induce a challenging tradeoff
between policy improvement, which requires a relaxed constraint, and transferability, which requires a tight constraint. To avoid this tradeoff, rather than constraining the improved policy to match the \textit{distribution} of the base policy, we constrain
it to act within the base policy's \emph{support}: the set of actions that the pretrained base policy can generate for a given observation. Under this constraint, policy improvement can amplify faster, more precise, or more robust behaviors supported by the base policy while suppressing suboptimal actions. This forces simulation RL to perform actions already represented in the real-world data, preventing it from introducing arbitrary actions that may not transfer to the real world. 

\textbf{Our key insight is that policy improvement in simulation should be constrained to the \emph{support} of the real-world base policy.} We instantiate this constraint with \ours{}: Support-Constrained Off-Domain REinforcement, a real-to-sim-to-real framework for policy improvement. \ours{} uses flow steering in simulation to achieve support-constrained policy improvement. In the real-to-sim stage, we train a base policy from real world demonstrations via generative imitation learning and construct simulated task environments from real-world scans. In simulation, we freeze the base policy and learn a \emph{steering policy} over its latent inputs using sparse rewards. In the final sim-to-real stage, we combine our steering policy with the real-world base policy, and deploy it in the real world, without distillation or finetuning of the base policy. We show that this yields a simple and scalable paradigm for improving policies in complex contact-rich domains, such as dexterous multi-fingered manipulation. 

Concretely, our contributions are:
\vspace{-6pt}
\begin{enumerate}[itemsep=1pt]
    \item \textbf{A formalization of off-domain policy improvement}, where the real-world performance of a pretrained policy must be improved using interaction only in a related source domain. We show why distributional constraints end up preserving suboptimal modes of the
    base policy, while support constraints allow for flexible improvement within the real-world prior.
    \item \textbf{Support-Constrained Off-Domain REinforcement (\ours)}, a practical
    support-constrained framework for improving real-world manipulation policies in simulation that preserves the base policy, avoids distillation, and learns from sparse rewards.
    \item \textbf{Real-world experiments across eight
    multi-fingered manipulation tasks}, where \ours{} improves average success rate
    from $37.8\%$ to $89.9\%$ and decreases average execution time per task by $36.8\%$. Furthermore, our ablations analyze the role of pretraining data diversity, simulation design choices, and the limitations of support constraints.
\end{enumerate}

\section{Related Work}
\textbf{Policy Learning for Dexterous Manipulation:}
Dexterous manipulation with multi-fingered hands remains challenging due to high-dimensional control, partial observability, and contact-rich dynamics. One line of work learns dexterous manipulation policies directly from demonstrations, often using expressive generative policy classes such as diffusion models and flow matching to represent multimodal action distributions~\cite{ho2020denoisingdiffusionprobabilisticmodels, chi2024diffusionpolicyvisuomotorpolicy, yan2025maniflowgeneralrobotmanipulation, intelligence2025pi05visionlanguageactionmodelopenworld}. While these policies provide strong behavioral priors, they are trained to reproduce the data distribution and can inherit slow, imprecise, or brittle behaviors from the demonstrations. Recent algorithmic work has therefore studied how to improve diffusion and flow-based policies with reinforcement learning~\cite{ren2024diffusionpolicypolicyoptimization, mcallister2025flowmatchingpolicygradients, park2025flowqlearning, wagenmaker2025steeringdiffusionpolicylatent, su2026rfsreinforcementlearningresidual, hong2026tmrldiffusiontimestepmodulatedpretraining, yi2026flowpolicygradientsrobot}.

\textbf{Simulation-Based Dexterous Policy Learning:}
A complementary line of work uses simulation to learn dexterous skills with RL, often relying on privileged state, domain randomization, reset distributions, system identification, dynamics adaptation, policy distillation, or real-world fine-tuning to transfer policies to hardware~\cite{yin2026emergentdexteritydiverseresets, torne2024reconcilingrealitysimulationrealtosimtoreal, yin2025dexteritygenfoundationcontrollerunprecedented, memmel2024asidactiveexplorationidentification, kumar2021rmarapidmotoradaptation, liu2025dexndmclosingrealitygap}. Other approaches reduce the sim-to-real and exploration burden through structured action spaces, state-estimation modules, retargeting, or object-centric representations~\cite{mandi2025dexmachinafunctionalretargetingbimanual, chen2025vividexlearningvisionbaseddexterous, qin2022dexmvimitationlearningdexterous, qin2022dexpointgeneralizablepointcloud, kedia2026simtoolrealobjectcentricpolicyzeroshot}. More directly related to our setting are methods that use imperfect behavior priors as the starting point for simulation optimization. ExpertGen trains a diffusion policy from behavior priors such as human- or LLM-generated scripted trajectories collected in \textit{simulation}, and steers the diffusion noise in simulation to obtain high-success state-based expert policies~\cite{xu2026expertgenscalablesimtorealexpert}. Although ExpertGen shares conceptual similarities with \ours{}, it starts from behavior priors generated in simulation and trains state-based policies that are distilled for real-world deployment, similar to \citet{su2026rfsreinforcementlearningresidual}. We instead study off-domain policy improvement, where a prior learned from real-world data is improved through simulation and reused directly for deployment. This shifts the role of simulation from generating a new expert policy to improving an existing real-world behavior prior, avoiding distillation, real-world fine-tuning, or modification of the base policy.

\textbf{Constrained Policy Improvement:}
When policy improvement relies on offline data or simulation rather than real-world interaction, unconstrained optimization can exploit poorly covered actions or simulator-specific artifacts, failing to transfer to the real world. This has been addressed in simulation-based policy learning using domain classifiers~\cite{eysenbach2021offdynamicsreinforcementlearningtraining}, motion priors~\cite{Peng_2021, dan2025xsimcrossembodimentlearningrealtosimtoreal}, behavior-cloning regularization~\cite{torne2024reconcilingrealitysimulationrealtosimtoreal}, or dynamics-aware penalties~\cite{niu2022when}. Parallel work in offline RL penalizes high values for actions outside the offline data distribution by learning conservative value estimates~\cite{wu2019behaviorregularizedofflinereinforcement, kumar2020conservativeqlearningofflinereinforcement}. Rather than only regularizing toward the behavior distribution or pessimistically modifying values, recent work studies \emph{support-constrained} improvement, where policy optimization is restricted to actions represented in the dataset~\cite{wagenmaker2025steeringdiffusionpolicylatent, singh2022offlinerlrealisticdatasets, mao2023supportedtrustregionoptimization, zhang2026reformreflectedflowsonsupport}. We build on this support-constrained perspective in the off-domain setting, where online interaction is available, but only in simulation.

\section{Preliminaries}

\subsection{Problem Setup}

We begin by defining the simulation and real-world domains as Markov Decision Processes (MDPs), denoted $\mathcal{M}_{\mathrm{sim}}$ and $\mathcal{M}_{\mathrm{real}}$, respectively. The two domains share a state space $\mathcal{S}$, action space $\mathcal{A}$, reward function $r:\mathcal{S}\times\mathcal{A}\rightarrow\mathbb{R}$, discount factor $\gamma \in (0,1)$, and initial state distribution $p_1(s_1)$. They differ only in their transition dynamics, $p_{\mathrm{sim}}(s_{t+1}\mid s_t,a_t)$ and $p_{\mathrm{real}}(s_{t+1}\mid s_t,a_t)$. We assume access to an initial policy $\pi_{\mathrm{base}}$, trained via interaction in the real-world domain. We assume minimal perceptual shift between the two domains under sufficient visual randomization.

For any policy $\pi$, we define its real-world performance using the expected discounted return:
\begin{equation}
J_{\mathrm{real}}(\pi)
=
\mathbb{E}_{\pi, p_{\mathrm{real}}}
\left[
\sum_{t=1}^{T} \gamma^{t-1} r(s_t,a_t)
\right],
\end{equation}
where trajectories are generated under the real-world transition dynamics
$p_{\mathrm{real}}$.

\begin{definition}[Off-Domain Policy Improvement]\label{def:odpi}
Given a policy $\pi_{\mathrm{base}}$ initially learned from interaction in $\mathcal{M}_{\mathrm{real}}$,  off-domain policy improvement is the problem of finding a policy $\hat{\pi}$ such that $J_{\mathrm{real}}(\hat{\pi}) > J_{\mathrm{real}}(\pi_{\mathrm{base}})$, using additional interaction only in $\mathcal{M}_{\mathrm{sim}}$.
\end{definition}

In dexterous manipulation, off-domain policy improvement is especially challenging because small discrepancies in contact dynamics, physical parameters, or low-level control can determine whether a behavior succeeds after deployment. As a result, there are many policies that achieve high performance in simulation, but completely fail when deployed in the real world.

\subsection{Illustrative Example of Off-Domain Policy Improvement}\label{prelim:toy_example}

To illustrate the problem, Fig.~\ref{fig:toy_example} instantiates a simple 2D example in which a robot arm is tasked with reaching a lightbulb. In the real world, a barrier prevents the arm from moving straight down to reach the lightbulb, but this barrier is unmodeled in the simulator, as indicated by its translucent shading. The barrier is representative of realistic failure cases in manipulation settings, such as small artifacts in the manipulator geometry that cause collisions in the real world but are unmodeled in the simulator. We assume access to a base policy that properly avoids the barriers and achieves non-zero success at a task, but is slow and imprecise, as indicated by the warning sign (Fig.~\ref{fig:toy_example}, first panel). Our goal is to improve this policy with simulated interaction.

\begin{figure*}[t]
    \centering

    \makebox[\textwidth][c]{%
        \includegraphics[
            width=\textwidth
        ]{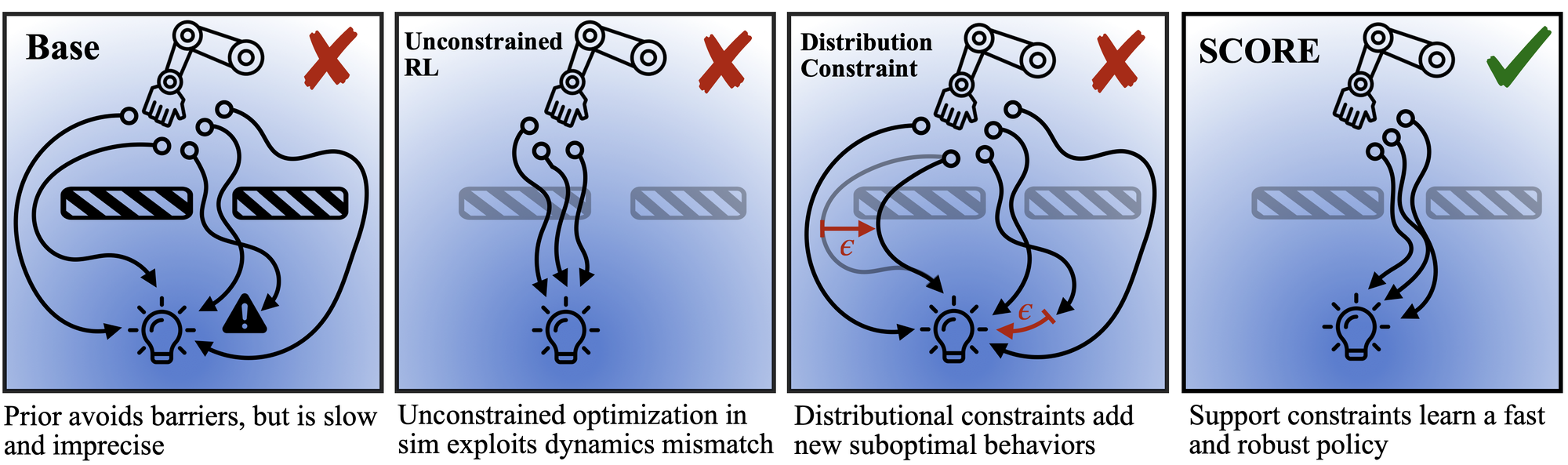}
    }
    \vspace{-1.1em}
    \caption{\footnotesize{\textbf{Toy Example}. The real-world base policy avoids barriers, but performs roundabout trajectories that sometimes miss the goal. In simulation, unconstrained RL exploits dynamics mismatch to move directly towards the goal, but this fails in the real world. As shown by the red arrows, distributional regularization allows for small deviations from the base policy, refining imprecisions but preserving slow motion and occasionally causing new failure modes. Meanwhile, \ours{} learns a fast, robust, and transferable policy. }}
    \label{fig:toy_example}
    \vspace{-1.4em}
\end{figure*}

\textbf{Unconstrained RL:} A first attempt at solving this problem might ignore the base policy altogether, using RL to optimize the task level reward in simulation without any constraints (Fig.~\ref{fig:toy_example}, second panel). This can result in \textit{reward hacking}, where the policy achieves high reward in simulation but performs undesirable or dangerous behaviors \citep{amodei2016concrete,skalse2022defining}. In practical sim-to-real settings, reward hacking is often mitigated through hand-designing the reward or the environment to avoid these behaviors, which requires tedious cycles of policy training, deployment, and reward tuning \citep{tan2018sim,wu2021learning}.

\textbf{Distributionally constrained RL:} Instead of hand designing reward functions to solve the task, previous work aims to take advantage of the real-world prior \cite{torne2024reconcilingrealitysimulationrealtosimtoreal, eysenbach2021offdynamicsreinforcementlearningtraining, Peng_2021,  huey2025imitationlearningsingletemporally}. This typically involves regularizing the learned policy under a \textit{distributional distance} from the base policy (Fig.~\ref{fig:toy_example}, third panel). Formally, for a distributional distance $D$ and a constant hyperparameter $\epsilon$, the distributionally-constrained RL objective is:
\begin{equation}
\hat{\pi} = \arg\max_{\pi} J_{\mathrm{sim}}(\pi) \quad \text{s.t.} \quad D(\pi,  \pi_{\mathrm{base}}) < \epsilon
\end{equation}

Following the view of \cite{ke2020imitationlearningfdivergenceminimization}, many regularizers can be interpreted as imposing distributional closeness to a behavior policy: behavior-cloning regularization corresponds to the forward-KL divergence, adversarial imitation methods such as GAIL correspond to Jensen-Shannon regularization~\cite{ho2016generativeadversarialimitationlearning, Peng_2021}, and bounded residual policies correspond to optimization under a Wasserstein-1 constraint around the base policy. However, there is a fundamental problem with distributional regularization: it introduces a challenging tradeoff between policy improvement, which may require ignoring suboptimal modes of the base policy, and satisfying the constraint by staying near the base policy.

\textbf{Support constrained RL:}
Unconstrained optimization learns actions that fail on deployment, and distributional constraints overconstrain the policy to slow and imprecise behaviors. An ideal approach would find the optimal modes of the base policy, while ignoring slow and imprecise actions (Fig.~\ref{fig:toy_example}, fourth panel). This idea is formalized via a \textit{support constraint}, which restricts policy learning to the set of actions with nonzero likelihood under the base policy. Support-constrained policies can choose the fastest mode of the base policy, refine the imprecise failure by borrowing actions from another trajectory, and avoid untransferable behavior. Formally, the support-constrained RL objective is:
\begin{equation}\label{eq:supp_constrained_objective}
\hat{\pi} = \arg\max_{\pi} J_{\mathrm{sim}}(\pi) \quad \text{s.t.} \quad \mathrm{supp}(\pi) \subseteq  \mathrm{supp}(\pi_{\mathrm{base}})
\end{equation}

A similar objective has been proposed in offline RL, where the learned policy can exploit out-of-distribution errors in the value function if it deviates too far from the support of the offline dataset \cite{singh2022offlinerlrealisticdatasets}. Such approaches typically require complicated strategies involving discriminators, modified Q-values, and iterations of simulation and real world experience, introducing additional hyperparameters and sources of instability \cite{eysenbach2021offdynamicsreinforcementlearningtraining, niu2022when}. Instead, we satisfy support constraints using the simple framework of \textit{flow steering}, where a policy is learned over the noise space of a pretrained flow policy \cite{wagenmaker2025steeringdiffusionpolicylatent, zhang2026reformreflectedflowsonsupport}. We elaborate on flow steering in Section~\ref{sec:score}, providing a practical algorithm for online support-constrained RL in massively parallel simulation that enables direct improvement of real-world manipulation policies without requiring any further real-world interaction.

In Appendix~\ref{app:proofs}, we provide a principled analysis of the limitations of unconstrained RL and distributional constraints for off-domain policy improvement, formalizing the assumptions necessary to ensure that support constraints address these limitations.

\section{Support Constraints Enable Transferable Policy Improvement}
\label{sec:score}
\label{sec:method}

We now describe \ours{}, a real-to-sim-to-real framework that improves a
real-world policy entirely in simulation, without new real-world data, reward
engineering, or finetuning the original policy. In doing so, we show how simple and effective massively parallel simulation can be for improving real-world policies. The central design choice in \ours{} is to optimize task
performance in simulation while constraining the learned policy to the
\textit{support} of a generative policy trained on real-world data. Rather than
updating this policy directly, \ours{} freezes the real-world base policy
$\pi_{\mathrm{base}}$ and learns a lightweight steering policy
$\pi_{\mathrm{steer}}$ over its latent inputs (Figure~\ref{fig:method}). RL can
then amplify the high-reward behaviors already within the support of the
real-world prior, producing improvements that can transfer back to hardware with no further distillation steps. 

\subsection{Learning and Steering a Real-World Flow Policy}
\label{sec:flow_steering}

For each task, we train a conditional flow policy $\pi_{\mathrm{base}}(a \mid o)$ on
real-world robot data. Let $o$ denote the deployment observation (point cloud, proprioception, history) and $a$ the robot action. Flow matching~\cite{lipman2023flowmatchinggenerativemodeling} learns a velocity field
$v_\theta(x_\tau,\tau,o)$ transporting samples from a prior $p_0$ (e.g.\
Gaussian) to the behavior distribution. Given
$(o,a)\sim\mathcal{D}_{\mathrm{real}}$, $x_0\sim p_0$,
$\tau\sim\mathcal{U}[0,1]$, and interpolant
$x_\tau=(1-\tau)x_0+\tau a$, the training objective is:
\begin{equation}
    \mathcal{L}_{\mathrm{FM}}(\theta)
    =\mathbb{E}_{\tau,(o,a),x_0}\!\left[
    \left\|v_\theta(x_\tau,\tau,o)-(a-x_0)\right\|^2\right]
\end{equation}
At inference, we draw $z\sim p_0$ and integrate
$dx_\tau/d\tau=v_\theta(x_\tau,\tau,o)$ from $x_0=z$ to obtain
$a=x_1$, written $a=\pi_{\mathrm{base}}(o,z)$. After training, \ours{} freezes
$\pi_{\mathrm{base}}$ and improves the policy only by learning a steering policy
$\pi_{\mathrm{steer}}(z\mid o)$ over the latent input. At policy timestep $k$,
the resulting policy samples:
\[
    z_k \sim \pi_{\mathrm{steer}}(\cdot \mid o_k), \qquad
    a_k = \pi_{\mathrm{base}}(o_k,z_k)
\]
For pure latent steering, this constrains the policy to act through the frozen real-world prior: for a fixed
observation $o$, it can only produce actions in the \textit{model-induced} set
$\mathcal{A}_{\mathrm{base}}(o)
= \{\pi_{\mathrm{base}}(o,z)\mid z\in\mathcal{Z}\}$.
We use $\mathcal{A}_{\mathrm{base}}(o)$ as the support for steering. Latent steering performs support-constrained RL by changing which actions within that support are more likely. As a result, simulation can favor faster, more precise, or more robust behaviors already captured by the base policy, while limiting the ability of RL to introduce arbitrary simulator-specific actions. We instantiate this steering approach using two flow steering methods, DSRL~\cite{wagenmaker2025steeringdiffusionpolicylatent} and RFS~\cite{su2026rfsreinforcementlearningresidual}, originally developed for real-world online RL, but instead we use them to steer entirely in simulation. We refer to their use in our method as \oursdsrl{} and \ours{}, respectively. DSRL performs \emph{pure latent steering}: it optimizes only the flow noise $z$, so every action lies within the model-induced set $\mathcal{A}_{\mathrm{base}}(o)$ above, imposing a hard model-induced support constraint. RFS additionally adds a small residual $a_r$ after sampling from the base flow, yielding $a_k=\pi_{\mathrm{base}}(o_k,z_k)+a_r$. This can be viewed as a soft support constraint that locally refines actions around the base policy support, giving the precision needed for dexterous manipulation. See Appendix~\ref{app:proofs} for a formal discussion. More generally, \ours{} is modular over the choice of support-constrained method, where DSRL and RFS are just two instances \cite{zhang2026reformreflectedflowsonsupport}.

\subsection{Simulation Optimization and Direct Deployment}
\label{sec:sim_optimization}
A key aim of \ours{} is to make simulation improvement simple and
scalable with \textbf{minimal reward engineering}: we train all our tasks with \textbf{sparse rewards} defined by the same success condition used for evaluation and \textbf{one set of
hyperparameters}. For each task, we construct a simulated environment from a scanned real-world
scene and randomize object initializations and physical parameters during
training. Within this environment, we optimize $\pi_{\mathrm{steer}}$ while
keeping the base policy $\pi_{\mathrm{base}}$ fixed:
\begin{equation}
    \underset{\pi_{\mathrm{steer}}}{\max}\;
    J_{\mathrm{sim}}(\pi_{\mathrm{base}} \circ \pi_{\mathrm{steer}})
    =
    \mathbb{E}_{z_k \sim \pi_{\mathrm{steer}}(\cdot \mid o_k),
    \; a_k = \pi_{\mathrm{base}}(o_k,z_k),
    \; p_{\mathrm{sim}}}
    \left[
        \sum_{k=1}^{T} \gamma^{k-1} r(s_k,a_k)
    \right],
\end{equation}
Naively optimizing from sparse rewards can be sample inefficient, especially in
dexterous manipulation where successful contact sequences are rare. To make
better use of parallel simulation, we optimize $\pi_{\mathrm{steer}}$ with PPO~\cite{schulman2017proximalpolicyoptimizationalgorithms}
using an asymmetric actor-critic architecture. The actor receives only deployment
observations and outputs the latent steering distribution, while the critic uses
privileged simulator state only during training. This improves value estimation, speeds up policy learning, and makes use of privileged information available in simulation without changing the deployed policy. We remove the need for policy distillation or fine-tuning in the real world by sampling $(z_k, a_r)\sim\pi_{\mathrm{steer}}(\cdot\mid o_k)$ and
executing $a_k=\pi_{\mathrm{base}}(o_k,z_k) + a_r$.

\section{Experiments}
\begin{figure*}[t]
    \centering

    \makebox[\textwidth][c]{%
        \includegraphics[
            width=\textwidth
        ]{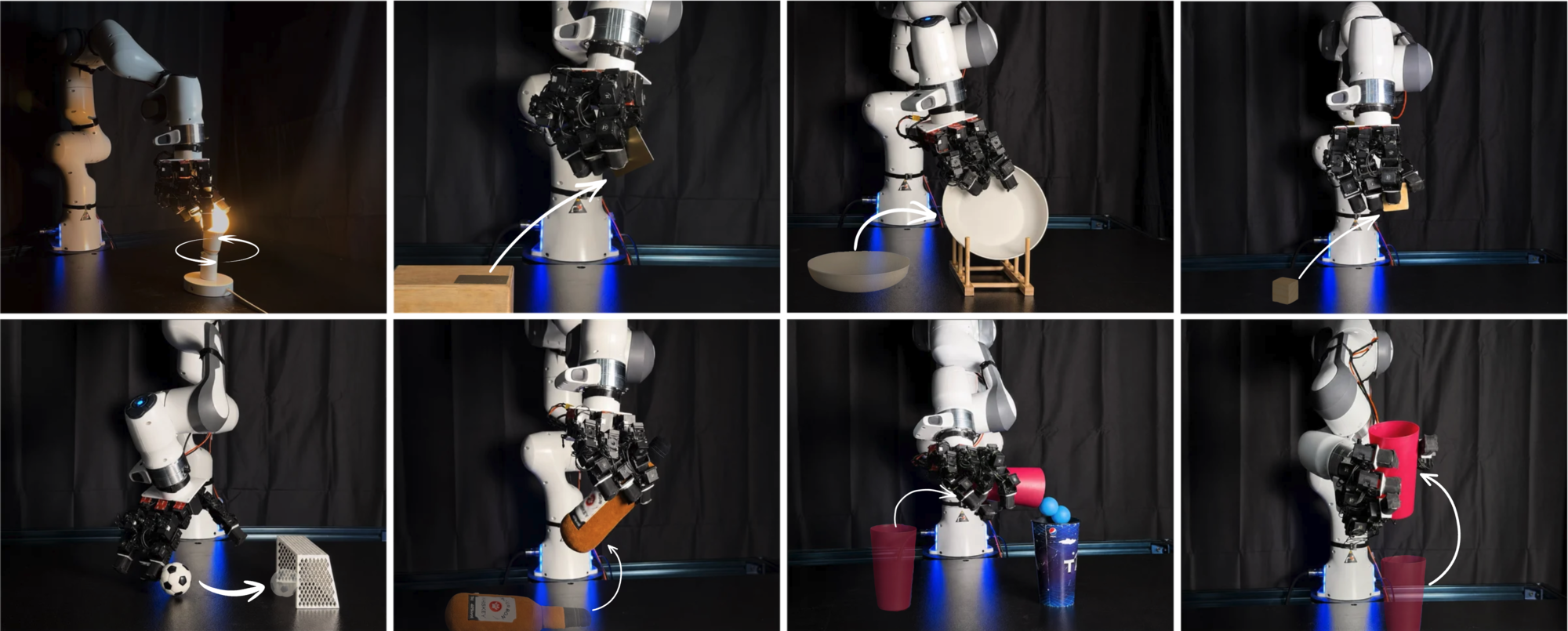}    }
    \vspace{-1.0em}
    \caption{\footnotesize \textbf{Real-world tasks.} We evaluate on eight contact-rich dexterous manipulation tasks spanning grasping, pouring, pushing, reorientation, and object placement.}
    \vspace{-1.4em}
    \label{fig:all_tasks}
\end{figure*}

We study whether support-constrained policy improvement in simulation can
improve real-world multi-fingered dexterous manipulation policies. This setting is especially challenging because the high-dimensional action space makes exploration difficult, and the complexity of the end-effector leads to unmodeled dynamics. Without constraints or extensive reward shaping, this often leads to learning unnatural and untransferable finger motions in simulation, even for simple grasping tasks. Our experiments are designed to answer three questions:
(1) what is the importance of support constraints in off-domain policy improvement; (2) how do the quality and coverage of the
real-world data affect the gains achievable by \ours{}; and (3) what system design
choices are necessary for a simple and efficient real-to-sim-to-real pipeline?  

Our evaluation consists of eight real-world dexterous manipulation tasks using a LEAP Hand \cite{shaw2023leaphandlowcostefficient} attached to a Franka FR3 arm, as shown in Fig.~\ref{fig:all_tasks}. In Appendix~\ref{app:addl_exps} and~\ref{app:qualitative} we report per-task evaluation metrics and qualitative examples respectively. Additional
details on data collection, rewards, evaluation protocol, and baseline implementations are provided in Appendix~\ref{app:exp_details}, and details on environment creation and domain randomization in Appendix~\ref{app:sim_details}.

\begin{wrapfigure}[14]{r}{0.5\textwidth}
    \centering
    \vspace{-1.5em}
    \includegraphics[width=\linewidth]{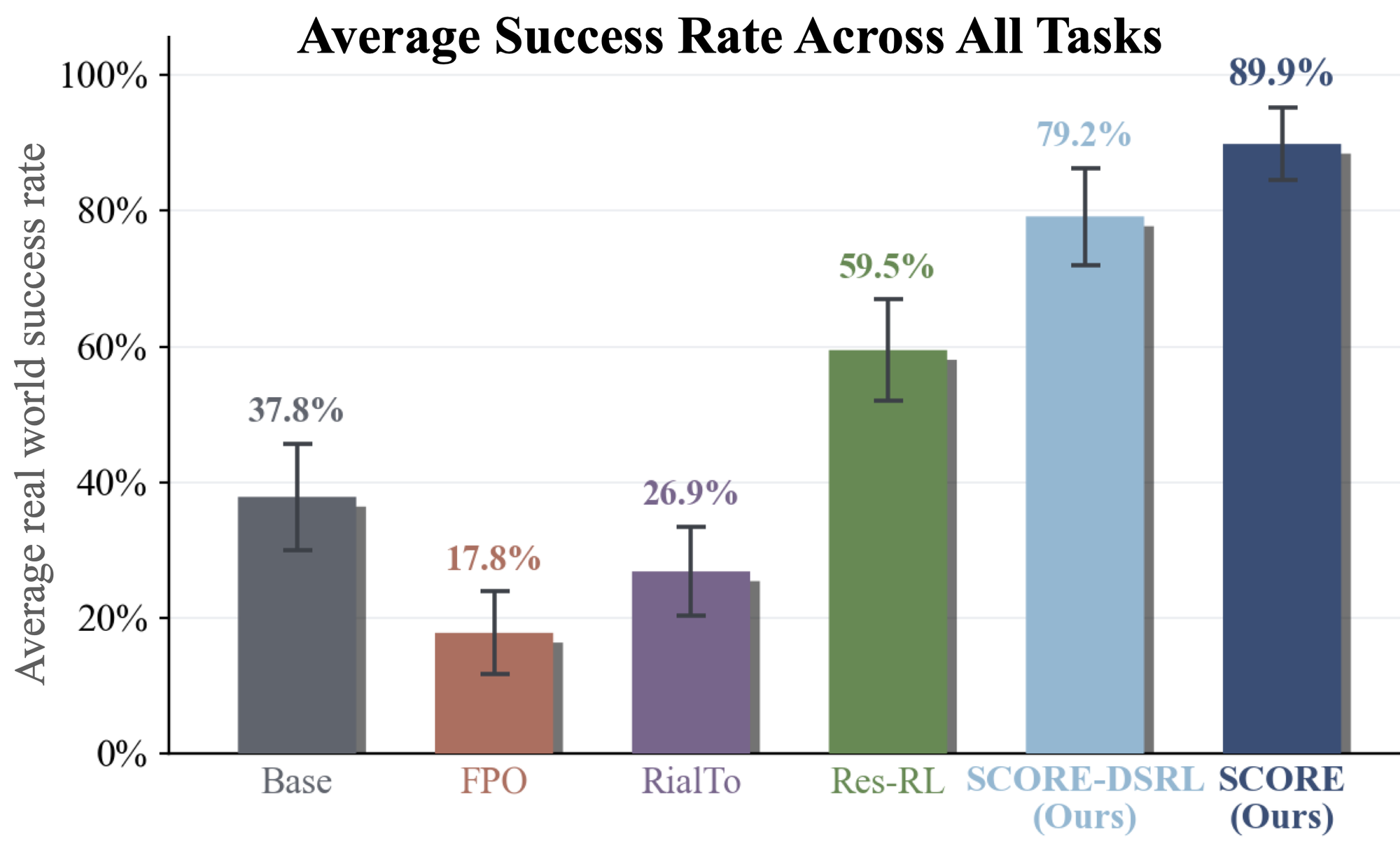}
    \vspace{-1.5em}
    \caption{\footnotesize \textbf{Average real-world success rate across all 8 tasks}. \ours{} and \oursdsrl{} outperform all baselines, while \fpo{} and \rialto{} learn dangerous actions, and \res{} is constrained to suboptimal behaviors.}
    \label{fig:real_success_bar}
    \vspace{-.5em}
\end{wrapfigure}

\subsection{Can \ours{} improve real-world policies using simulation?}

Figure~\ref{fig:real_success_bar} reports average real-world success rate across all eight tasks. \ours{} improves over the base policy on every task, increasing average success rate from \textbf{37.8\% to
89.9\%}. We report individual task success in Appendix~\ref{app:task_success_table}. \ours{} brings all easy tasks to 100\%, and yields its largest gains over the base policy on Ball Pour ($11\%\rightarrow89\%$), Credit Card Pick ($10\%\rightarrow80\%$), and Cube Pinch ($30\%\rightarrow100\%$). Ball Pour and Credit Card Pick require precise motions that the base policy cannot consistently perform, while \ours{} learns precision through massively parallel interaction. In Cube Pinch, \ours{} learns to consistently recover from failed grasps, enabling $100\%$ success rates. Consistent with \citet{su2026rfsreinforcementlearningresidual}, \ours{} outperforms \oursdsrl{}, though both exhibit similar behavior and perform better than all baselines. To qualitatively substantiate our claims, we provide visuals of \ours{} and all baselines in Appendix \ref{app:qualitative}.
\begin{wrapfigure}[13]{r}{0.32\textwidth}
    \centering
    \vspace{-1em}
    \includegraphics[width=\linewidth]{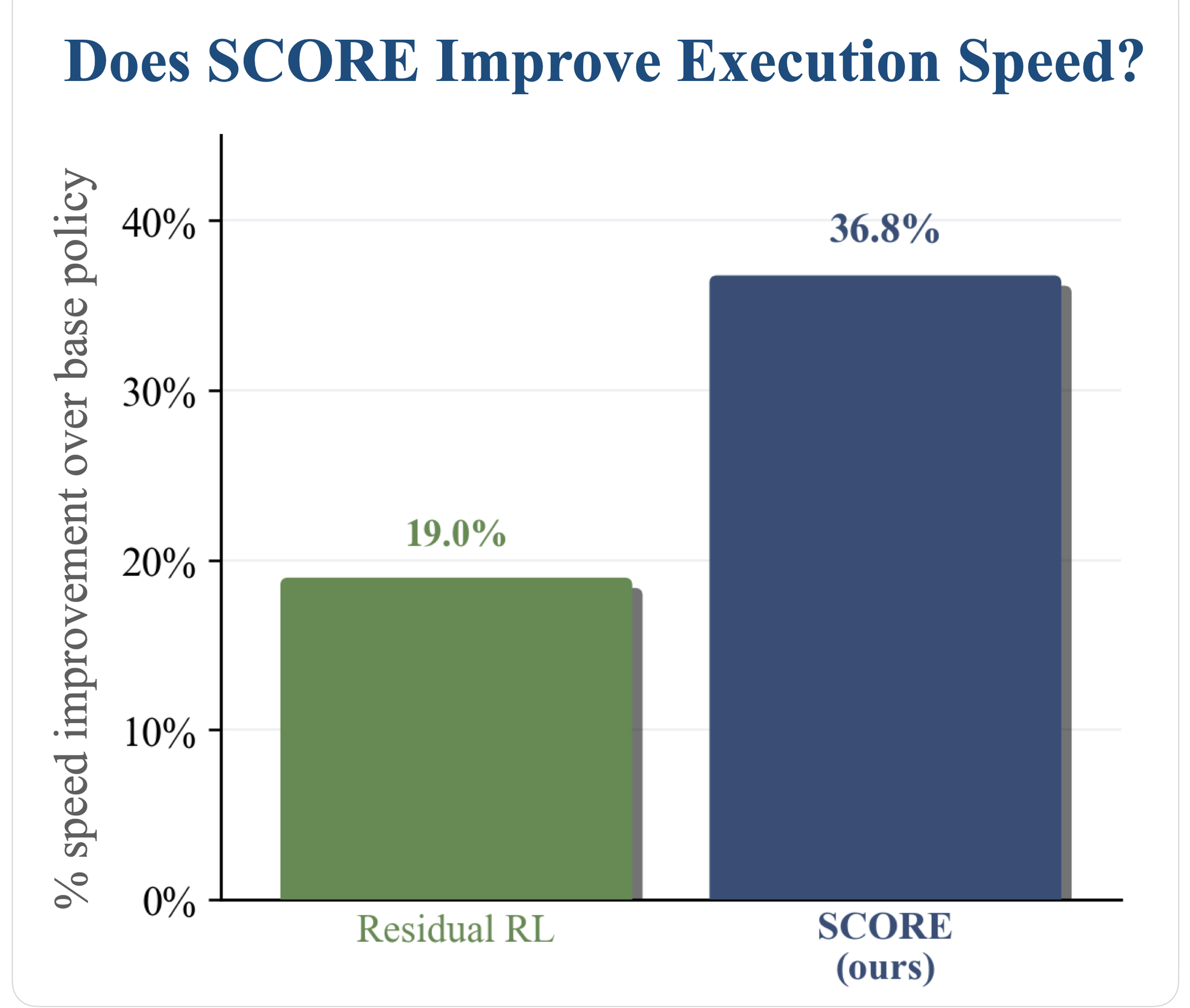}
    \caption{\footnotesize \textbf{Speed improvement} of \ours{} and \res{} over \base, averaged across 8 tasks.} 
    \vspace{-3em}
    \label{fig:real_speed_bar}
\end{wrapfigure}

Motivated by our discussion in Section~\ref{prelim:toy_example}, we now empirically investigate how support constraints overcome the limitations of unconstrained and distributionally constrained optimization. 

\textbf{Does unconstrained optimization in simulation result in dangerous behavior?} To test our hypothesis about unconstrained optimization, we optimize the base policy in simulation using \fpo~\cite{mcallister2025flowmatchingpolicygradients}, a state-of-the-art method for reinforcement learning with flow policies. As shown in Appendix~\ref{app:task_success_table}, FPO policies achieve high performance in simulation, but their performance drastically degrades when deployed in the real world due to dangerous reward hacking behavior. Appendix~\ref{app:qualitative} illustrates this behavior most prominently in the Soccer Push task, where \fpo{} learns to strike the table hard with the hand and trap the ball using simulator-specific contact dynamics. While this behavior succeeds in simulation, it is unsafe to deploy on real hardware.

\textbf{Do distributional constraints limit optimal improvement?}  To test whether distributional constraints can indeed preclude optimal improvement, we compare \ours{} against two distributionally-regularized baselines, \rialto{} and \res{}. \rialto{} is a real-to-sim-to-real method that performs PPO with a BC regularization term, equivalent to forward-KL regularization. As shown in Fig.~\ref{fig:distributional_tradeoff_empirical}, we observe a clear tradeoff: weak regularization improves simulation performance but leads to poor real-world transfer, while strong regularization keeps the policy close to the data and limits improvement. Consequently, \rialto{} fails substantially relative to the base policy in the real world.

\res{} learns a bounded residual policy over the base policy, which can be interpreted as constraining policy improvement via the Wasserstein-1 distance. \res{} improves over the base policy on several tasks, indicating that a residual can be useful for local refinements. However, unlike \ours{}, \res{} can only locally perturb the action chosen by the base policy, limiting its improvement over a slow or suboptimal base policy. Fig.~\ref{fig:real_speed_bar} shows that, on average, \res{} completes tasks just 19.0\% faster than the base policy, whereas \ours{} completes tasks \textbf{36.8\% faster}. This indicates that \ours{} steers towards fast modes of the base policy, addressing a key conceptual limitation of distributional constraints discussed in Sec.~\ref{prelim:toy_example}. 

\begin{figure*}[t]
    \centering

    \makebox[\textwidth][c]{%
        \includegraphics[
            width=\textwidth
        ]{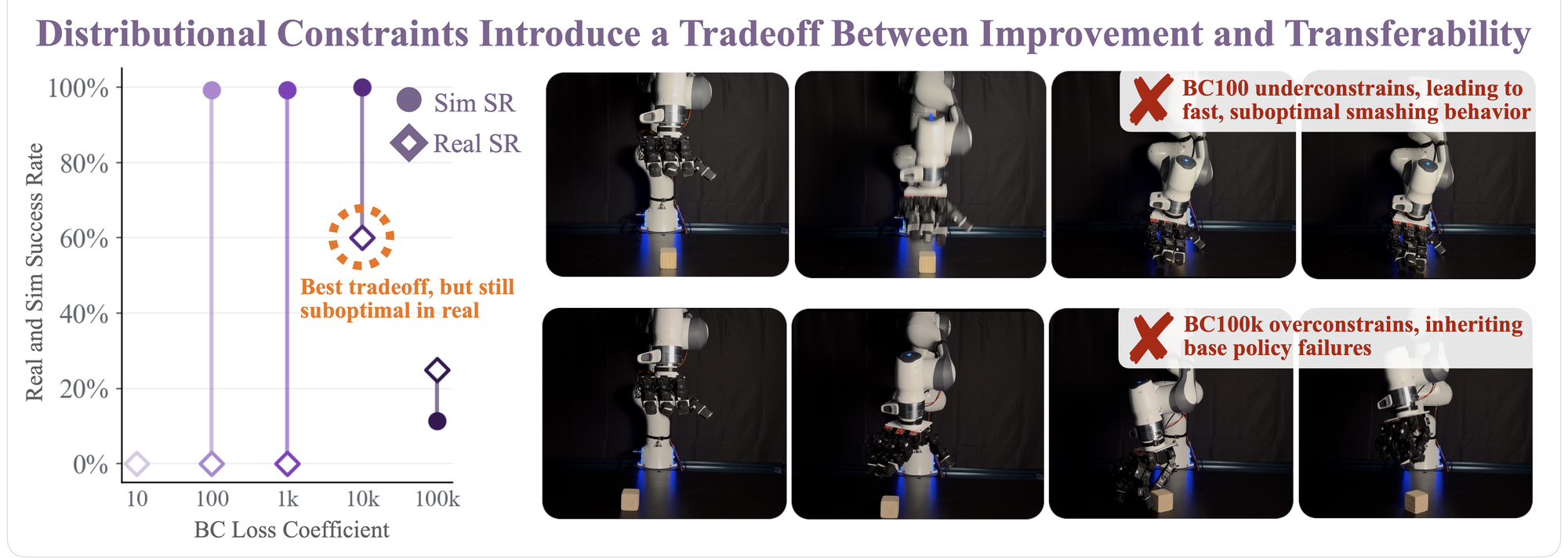}
    }
    \caption{\footnotesize{\textbf{Distributional Constraints Introduce a Tradeoff Between Improvement and Transferability}. The left plot shows the simulated performance (circles) and real world performance (diamonds) of RialTo policies trained with 5 different levels of BC regularization during BC-PPO. A value of 10 leads to collapse in simulation, while a larger value of 100 learns a dangerous strategy far from the base policy distribution, as shown in the top film strip. On the other end of the spectrum, a value of 100k constrains policy learning in simulation, retaining suboptimal failure modes of the base policy. The middle ground of 10k achieves $60\%$ success rate in the real world, improving over the base policy's 30\%, but remaining far below \ours{}, which achieves 100\%. }}
    \label{fig:distributional_tradeoff_empirical}
    \vspace{-1.5em}
\end{figure*}

\subsection{How does the coverage of the real-world base policy affect \ours{} improvement?}
\label{subsec:coverage_ablation}

\ours{} can only steer through the support of the base policy, so its improvement depends on the coverage of the real-world data. We study this along several axes, including the amount of data, imperfect retry and play data, adaptation to unseen objects and distractors, and a shared multi-task prior.

\begin{wrapfigure}[11]{r}{0.45\textwidth}
    \centering
    \vspace{-1.6em}
    \includegraphics[
        width=0.45\textwidth,
    ]{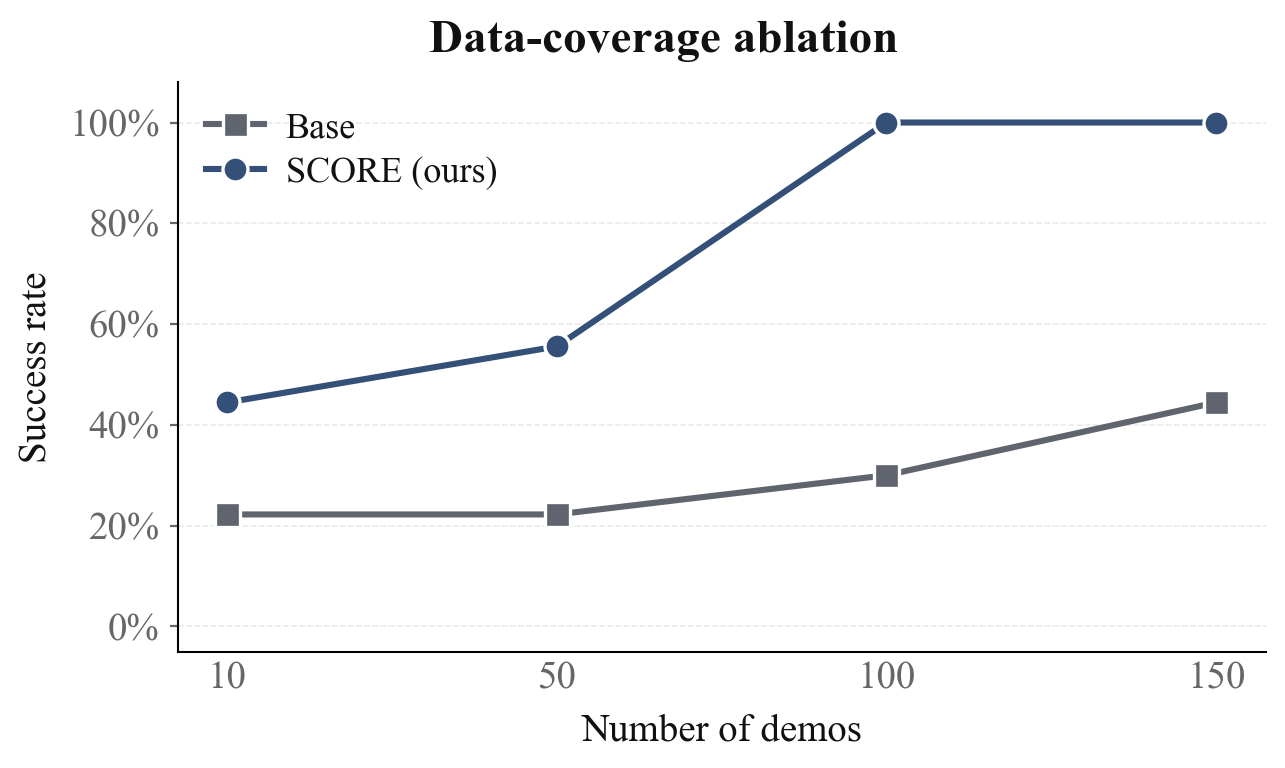}
    \caption{\footnotesize \textbf{Data-size ablation.}
    More data enables stronger support-constrained improvement.}
    \label{fig:data_ablation_curve}
    
\end{wrapfigure}

\textbf{Can more demonstrations improve steering?} We train base policies on the Cube Pinch task with varying numbers of demonstrations and apply \ours{} on each. Fig.~\ref{fig:data_ablation_curve} shows that additional demonstrations produce only modest improvements in the base policy. However, applying \ours{} on top of a base policy trained with more data improves sharply, reaching 100\% success with 100 demonstrations. Additional demonstrations may lead to stronger support, allowing \ours{} to amplify gains during post-training even when the pretrained policy improves marginally.

\textbf{Can imperfect data improve steering?}
\label{subsec:retry} To do so, we train two base policies on Cube Pinch datasets with equal size but different coverage: one contains 70 optimal demonstrations plus 40 additional optimal grasps, while the other contains the same 70 optimal demonstrations plus 40 retry demonstrations. Retry behaviors include intentional misses, drops, or perturbations of the cube before recovering. As shown in Fig.~\ref{fig:retry_steering}, retry data does not improve the base policy by itself. However, \ours{} applied to the base policy trained with retry data reaches $100\%$ success, up from $40\%$ when the base policy is trained on optimal demonstrations only. We observe the same principle with play data: we train one base policy with with 40 demonstrations on the right side of the workspace, and another with 10 additional suboptimal interactions on the left side of the workspace. These involve arm motions towards the object, but no successful grasps. Fig.~\ref{fig:play_steering} shows that the additional suboptimal data does not improve the base policy, but improves the \ours{} policy steered from it from $30\%$ to $64\%$. Retry and play data don't improve the pretrained policy, but they expand the behaviors available to \ours{}, resulting in better performance after post-training.

\begin{wrapfigure}[27]{r}{0.32\textwidth}
    \centering
    \vspace{-0.4em}

    \includegraphics[width=\linewidth]{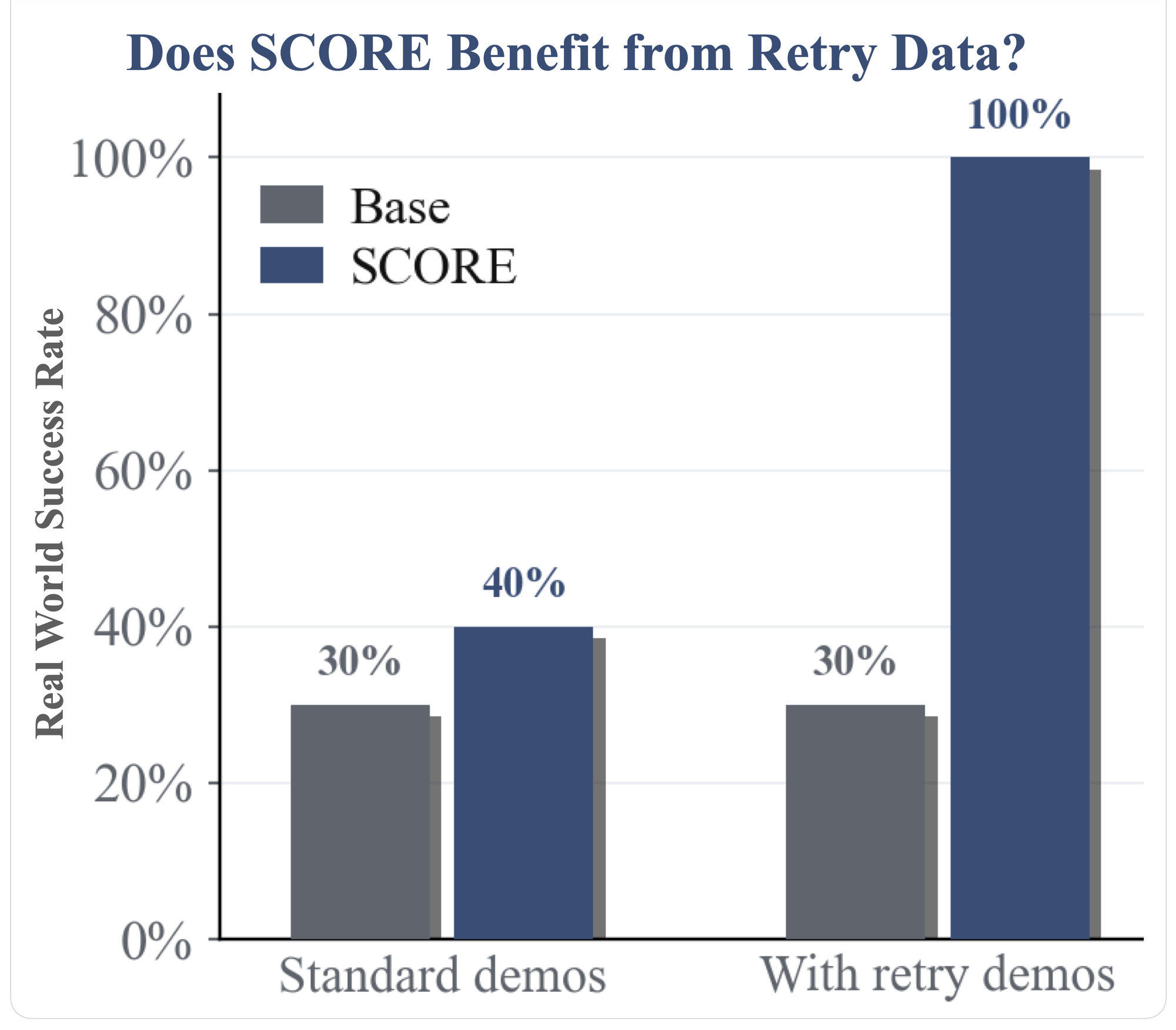}
\caption{\footnotesize{\textbf{Cube Pinch retry data.} Retry data leaves the base policy unchanged, but lets \ours{} improve from 40\% to 100\% success after simulation steering.}}

    \label{fig:retry_steering}

    \vspace{0.4em}

    \includegraphics[width=\linewidth]{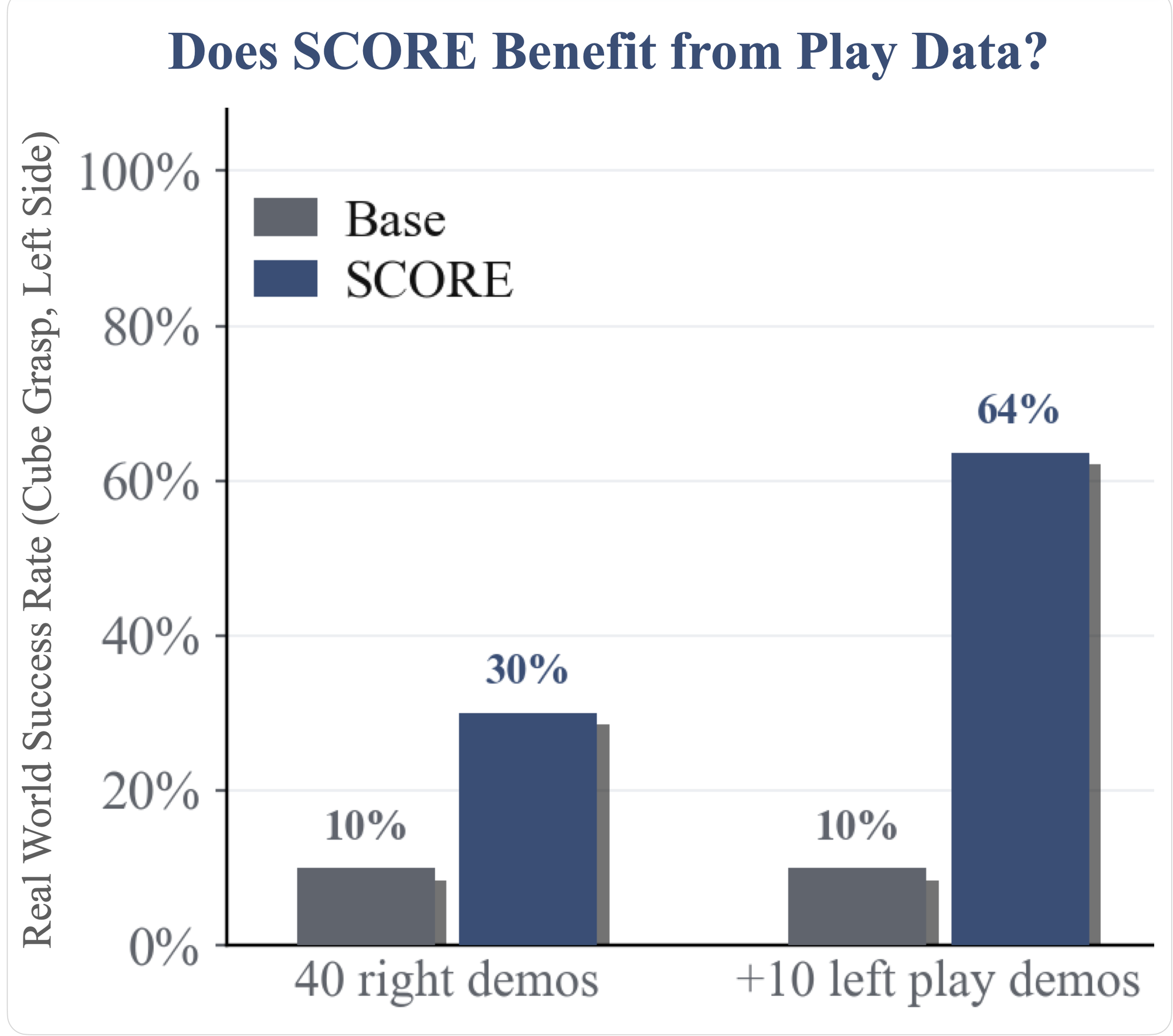}
    \caption{\footnotesize{\textbf{Play data ablation.} With only right-side coverage, \base{} and \ours{} fail on the left; adding left play data lets \ours{} improve.}}
    \label{fig:play_steering}
\end{wrapfigure}

\textbf{Can \ours{} adapt to unseen objects and distractors?} Our previous experiments test the ability of \ours{} to improve policies in fixed environments, but real world tasks are constantly changing. Below, we train \ours{} in a simulation environment unseen by the base policy, then deploy the resulting policy directly in the new environment.  

In \textbf{object transfer}, we test whether \ours{} can steer a policy trained on a different object to pinch a carrot (Fig.~\ref{fig:distractor_carrot}). We start by evaluating the Bottle Grasp \ours{} policy on the carrot. It fails to perform a precise pinch, achieving 22\% success. We then steer the bottle grasp base policy in an environment with a carrot, improving the real-world success rate to 67\%. Intuitively, the base policy occasionally performs the necessary pinch (e.g., when it grasps the bottleneck), and \ours{} can amplify this behavior when steered in the carrot environment. In contrast, the cup base policy does not support pinching, and thus steering it to pinch the carrot fails.

In \textbf{distractor adaptation}, we add distractor cubes to the bottle grasp task. The base policy and bottle grasp \ours{} policy both fail under this visual difference. When we train \ours{} with the same distractors in simulation, it improves to $56\%$ success, but only on one side of the workspace (Fig.~\ref{fig:distractor_carrot}). We suspect that the base policy is incapable of distinguishing the objects, and it is thus difficult to steer in the new setting. In addition to a diverse base policy, this demonstrates the importance of visual robustness.

\begin{figure*}[!t]
    \centering
    \vspace{-1.5em}

    \makebox[\textwidth][c]{%
        \includegraphics[
            width=\textwidth,
            keepaspectratio,
            trim={0.05in 0.05in 0.05in 0.05in},
            clip
        ]{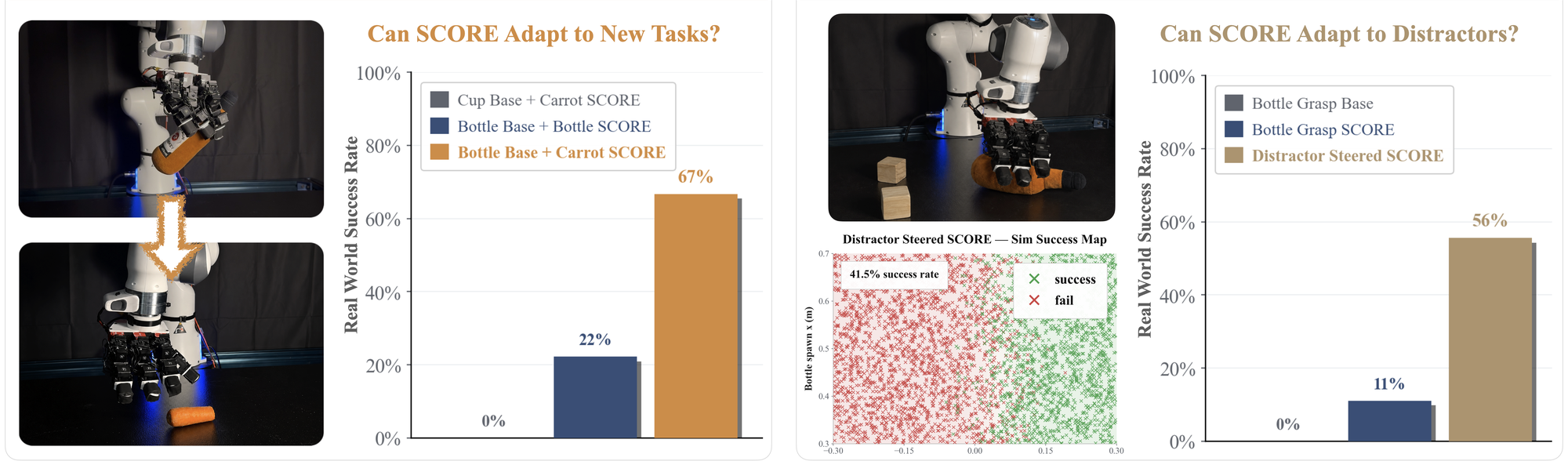}
    }
    \vspace{-1.5em}
\caption{\footnotesize{ \textbf{Adaptation experiment.}
\ours{} adapts the base policy to shifted settings only when its support contains compatible behavior. (Left) Steering the bottle-grasp prior toward carrot grasping improves real-world success from 22\% to 67\% by reusing compatible pinches already inside the prior, while the cup-grasp prior fails, as it lacks the behavior. (Right) With distractor cubes added, \ours{} improves over the broken base policy but succeeds only in part of the workspace, as steering struggles to explore far from the base policy's coverage. }}
    \vspace{-1.5em}
    \label{fig:distractor_carrot}
\end{figure*}

\textbf{Can a shared multi-task prior expand support?} To better understand how \ours{} performs on more perceptually challenging tasks, we test \ours{} in a multi task setting, which requires the policy to choose the correct grasp for different objects. We train a single base policy on the union of Cube Pinch, Bottle Grasp, and Credit Card Pick demonstrations, and train \ours{} on all three tasks simultaneously in simulation. As shown in Table~\ref{tab:multitask_success}, the multi-task base policy underperforms the single-task policies. Qualitatively, we observe grasping strategies interfere across tasks, such as applying the credit card pinch to the bottle, which we attribute to the base policy struggling to identify task structure (Appendix~\ref{app:qualitative_multitask}). After steering, however, multi-task \ours{} recovers strong single-task performance and reuses coverage across tasks: on Cube Pinch under the wider Bottle Grasp resets, it reaches $95\%$ versus $55\%$ for single-task Cube Pinch \ours{}.

Our experiments investigate the importance of dataset size, retry behaviors, suboptimal play data, multi-task priors, and shifts between the pretraining and post-training environment. In all of these experiments, \ours{} benefits from broader behavior coverage in the pretraining data, while it fails when the post-training environment is significantly out of distribution. This suggests that pretraining should aim not just for the strongest base policy, but for a broad behavior prior.

\subsection{Which Simulation Design Choices Matter for \score{}?}
\begin{wraptable}[7]{r}{0.62\textwidth}
\vspace{-\baselineskip}
\centering
\small
\setlength{\tabcolsep}{4pt}
\renewcommand{\arraystretch}{1.05}
\caption{\small \textbf{Multi-task real-world success.} ST/MT denote single/multi-task. Last row: Cube Pinch under wider Bottle Grasp resets.}
\label{tab:multitask_success}
\begin{tabular}{@{}lcccc@{}}
\hline
\textbf{Eval} & \textbf{ST Base} & \textbf{MT Base} & \textbf{ST SCORE} & \textbf{MT SCORE} \\
\hline
Bottle Grasp & 90\% & 60\% & \textbf{100\%} & \textbf{100\%} \\
Cube Pinch & 30\% & 25\% & \textbf{100\%} & \textbf{100\%} \\
Credit Card & 10\% & 10\% & \textbf{80\%} & 75\% \\
\hline
Cube (BG reset) & 10\% & 10\% & 55\% & \textbf{95\%} \\
\hline
\end{tabular}
\vspace{-\baselineskip}
\end{wraptable}

While \ours{} is a simple framework for policy improvement in any simulator, we found certain design decisions to be especially helpful in improving the sample efficiency and performance of our policies.

\textbf{Asymmetric actor-critic architecture:}
\ours{} trains the actor using deployment-compatible observations, while allowing the critic to access privileged simulation state. To isolate the effect of this design choice, we compare against a symmetric variant in which the critic receives the same point cloud and robot proprioception as the actor. The actor architecture and deployment observation space are unchanged. As shown in Fig.~\ref{fig:asymmetric_critic}, the asymmetric critic substantially improves sample efficiency and final simulated success across tasks. This suggests that privileged critic information provides a stronger learning signal during simulation training, while preserving the deployability of the actor.

\textbf{Artificial failures for recovery behavior:}
In the bottle and cube grasp tasks, \ours{} benefits significantly from failure recovery, as demonstrated in Section~\ref{subsec:retry}. However, once the policy learns reliable first-try grasps in simulation, it rarely visits failure states. Later in training, we find that this leads to forgetting of recovery behaviors. To preserve recovery behavior throughout training, we add \emph{artifical failures} to the bottle and cube task by displacing the object on a random interval after contact. Removing artifical failures lowers real-world success only from 100\% to 95\%, because most evaluation episodes are solved on the first grasp. Qualitatively, policies trained without artificial failures recover less reliably after misses, pushes, or cube displacement.

\begin{figure*}[!t]
    \vspace{-3em}

    \centering
    \includegraphics[
        width=0.86\textwidth,
        height=0.24\textheight,
        keepaspectratio,
        trim={0.15in 0.05in 0.15in 0.05in},
        clip
    ]{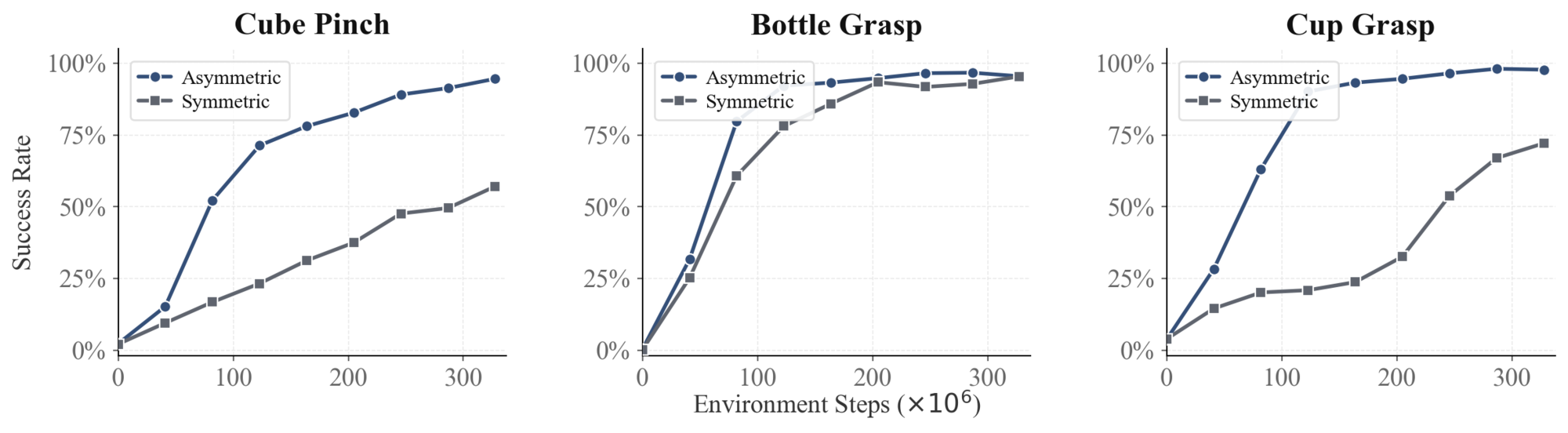}
    \vspace{-0.5em}
    \caption{\small \textbf{Asymmetric actor-critic ablation.}
    Using an asymmetric critic improves sample efficiency and final simulated success while keeping the actor observation and deployment policy unchanged. Evaluation is performed over 4096 environments every 40M steps.}
    \label{fig:asymmetric_critic}
    \vspace{-1.5em}
\end{figure*}

\section{Discussion}
We introduce \ours{}, a real-to-sim-to-real framework for off-domain policy improvement that preserves transferability by steering through the support of a real-world base policy. Across eight dexterous manipulation tasks, \ours{} improves both success rate and execution speed over the base policy and other baselines. Our experiments show that diverse behavior coverage, retry data, asymmetric actor-critic training, and sufficient dataset scale are important for robust and scalable steering. Overall, \ours{} suggests that simulation can be used to improve existing real-world policies, rather than just training new policies from scratch.

\label{sec:limitations}

\paragraph{Limitations.}
While \ours{} improves real-world policies through simulation, our experiments in Section~\ref{subsec:coverage_ablation} show that improvement remains limited in out-of-distribution environments. Furthermore, as indicated by our cross-object and play data results, scaling \ours{} to more diverse settings may be feasible with stronger and broader behavior priors. Thus, we suspect that \ours{} may substantially benefit from large-scale multi-task settings, depending on the diversity and coverage of the pretraining dataset. An exciting future direction is designing datasets and pre-training algorithms for steering, rather than zero-shot performance.

\section{Acknowledgements}

This research was supported by Amazon FAR (Frontier AI \& Robotics). We thank Entong Su for assistance with the Franka-LEAP robot setup, Patrick Yin for guidance on sim-to-real alignment, and Chuning Zhu and Jesse Zhang for helpful feedback on the manuscript.

\bibliography{bib/cites}
\newpage
\onecolumn %
\appendix
\part{Appendix} %
\parttoc %
\section{Per-Task Performance}
\label{app:addl_exps}
\label{app:task_success_table}

We report real-world success rates for each task in Table~\ref{tab:real_success}. In Fig.~\ref{fig:speed_all_tasks}, we show the per-task improvement in time to completion of \ours{} and \res{} over the base policy. We also report simulation success rates in Table~\ref{tab:sim_success_throughput}. These results show that high simulation success does not necessarily imply high real-world performance: several baselines achieve strong simulated performance but fail to transfer reliably to hardware.

\begin{figure*}[h]
    \centering
    \makebox[\textwidth][c]{%
        \includegraphics[
            width=.9\textwidth
        ]{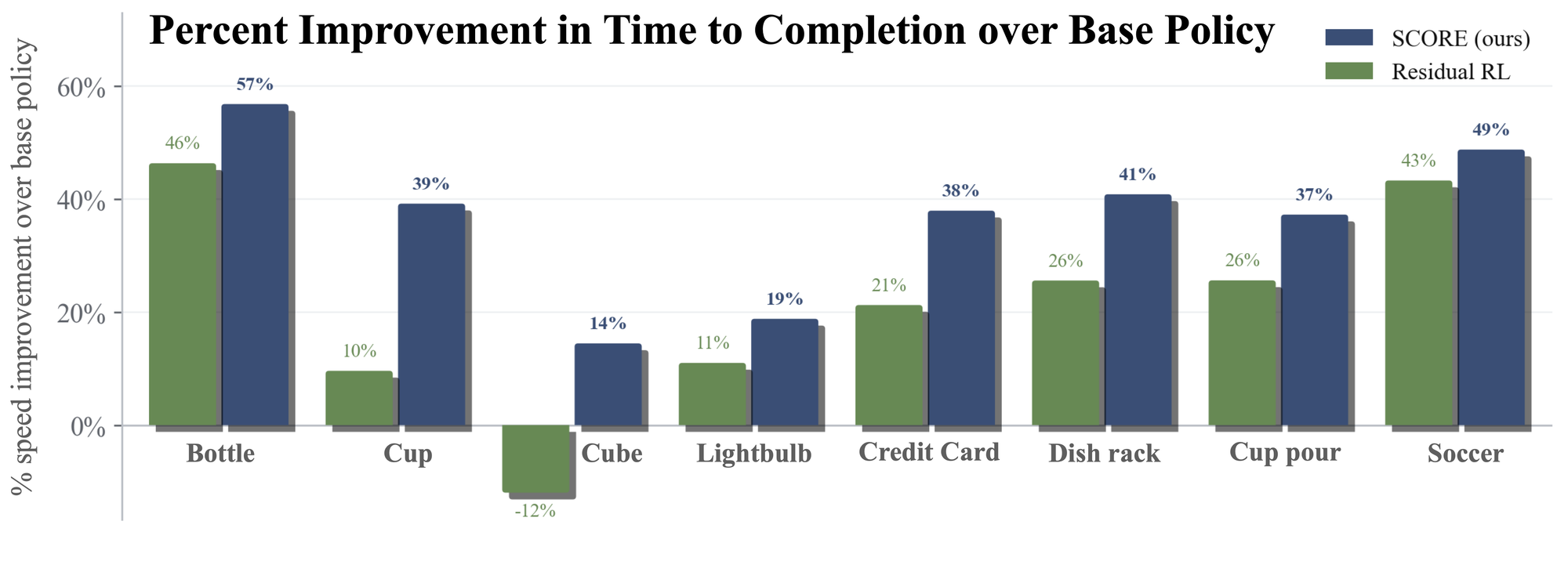}
    }
    \vspace{-1.5em}
    \caption{\footnotesize{\textbf{Percent improvement in time to completion of \ours{} and \res{} over \base}. \ours{} improves substantially over the base policy and beats \res, the nearest baseline, on all tasks. }}
    \label{fig:speed_all_tasks}
    \vspace{-1em}
\end{figure*}

\renewcommand{\arraystretch}{0.85}
\setlength{\tabcolsep}{4pt}
\begin{table*}[!ht]
\centering
\caption{\small \textbf{Simulation success rates for all tasks.}
Each cell reports success rate in simulation. RialTo success rates are
post-distillation. All evaluations are performed across 4096 environments, with environments randomized uniformly according to the bounds in Table~\ref{tab:eval_reset_randomization}. For each task, the method with the highest success rate is bolded and highlighted in green.
 Due to the extensive domain randomization in simulation, we find that simulation success rates are often lower than real world success rates, especially for transferable policies trained via \ours{}.}\label{tab:sim_success_throughput}
\begin{tabular}{llcccccc}
\toprule
Category & Task 
& \makecell{\textbf{Base}} & \makecell{\textbf{FPO}}
& \makecell{\textbf{RialTo}} & \makecell{\textbf{Res-RL}} & \makecell{\textbf{SCORE}\\\textbf{(DSRL)}} & \makecell{\textbf{SCORE}} \\
\midrule
\multirow{3}{*}{Easy}
& Bottle Grasp
& \plaincell{0.0}
& \plaincell{83.8}
& \plaincell{82.5}
& \plaincell{81.8}
& \plaincell{94.5}
& \plaincodecell{96.7} \\
& Cup Grasp
& \plaincell{3.8}
& \plaincell{57.9}
& \plaincell{97.5}
& \plaincodecell{97.9}
& \plaincell{94.2}
& \plaincell{96.5} \\
& Cube Pinch
& \plaincell{2.2}
& \plaincell{93.7}
& \plaincell{90.0}
& \plaincell{90.9}
& \plaincodecell{97.6}
& \plaincell{96.6} \\
\midrule
\multirow{3}{*}{Medium}
& Lightbulb Screw
& \plaincell{0.0}
& \plaincell{99.8}
& \plaincell{67.9}
& \plaincell{98.5}
& \plaincodecell{99.9}
& \plaincell{99.7} \\
& Dishrack Place
& \plaincell{0.0}
& \plaincell{0.0}
& \plaincodecell{67.5}
& \plaincell{47.7}
& \plaincell{64.7}
& \plaincell{65.6} \\
& Credit Card Pick
& \plaincell{4.7}
& \plaincell{68.1}
& \plaincell{45.0}
& \plaincell{78.7}
& \plaincodecell{85.4}
& \plaincell{82.2} \\
\midrule
\multirow{3}{*}{Hard}
& Ball Pour
& \plaincell{0.0}
& \plaincell{0.0}
& \plaincell{16.0}
& \plaincell{11.6}
& \plaincell{19.9}
& \plaincodecell{21.7} \\
& Soccer Push
& \plaincell{3.7}
& \plaincodecell{91.1}
& \plaincell{32.5}
& \plaincell{39.4}
& \plaincell{52.8}
& \plaincell{50.4} \\
\midrule
& \textbf{Average}
& \plaincell{1.8}
& \plaincell{61.8}
& \plaincell{62.4}
& \plaincell{68.3}
& \plaincell{76.1}
& \plaincodecell{76.2} \\
\bottomrule
\end{tabular}
\end{table*}

\renewcommand{\arraystretch}{0.85}
\setlength{\tabcolsep}{4pt}
\begin{table*}[!ht]
\centering
\caption{\small \textbf{Real-world success rate for all tasks.}
Each cell reports success rate. For each task, the method with the best success rate is bolded and highlighted in green. Evaluation is performed according to Table~\ref{tab:eval_reset_randomization}. }
\label{tab:real_success}
\begin{tabular}{llcccccc}
\toprule
Category & Task 
& \makecell{\textbf{Base}} & \makecell{\textbf{FPO}}
& \makecell{\textbf{RialTo}} & \makecell{\textbf{Res-RL}} & \makecell{\textbf{SCORE}\\\textbf{(DSRL)}} & \makecell{\textbf{SCORE}} \\
\midrule
\multirow{3}{*}{Easy}
& Bottle Grasp
& \plaincell{91.7}   %
& \plaincell{50.0}   %
& \plaincell{58.3}   %
& \plaincell{87.5}   %
& \plaincell{100.0}  %
& \plaincodecell{100.0} \\ %
& Cup Grasp
& \plaincell{46.7}   %
& \plaincell{0.0}    %
& \plaincell{53.3}   %
& \plaincell{93.3}   %
& \plaincell{86.7}   %
& \plaincodecell{100.0} \\ %
& Cube Pinch
& \plaincell{30.0}   %
& \plaincell{35.0}   %
& \plaincell{60.0}   %
& \plaincell{80.0}   %
& \plaincell{75.0}   %
& \plaincodecell{100.0} \\ %
\midrule
\multirow{3}{*}{Medium}
& Lightbulb Screw
& \plaincell{50.0}   %
& \plaincell{37.5}   %
& \plaincell{0.0}    %
& \plaincell{50.0}   %
& \plaincell{100.0}  %
& \plaincodecell{100.0} \\ %
& Dishrack Place
& \plaincell{50.0}   %
& \plaincell{0.0}    %
& \plaincell{0.0}    %
& \plaincell{45.0}   %
& \plaincell{65.0}   %
& \plaincodecell{90.0} \\ %
& Credit Card Pick
& \plaincell{10.0}   %
& \plaincell{0.0}    %
& \plaincell{10.0}   %
& \plaincell{80.0}   %
& \plaincell{80.0}   %
& \plaincodecell{80.0} \\ %
\midrule
\multirow{2}{*}{Hard}
& Ball Pour
& \plaincell{11.1}   %
& \plaincell{0.0}    %
& \plaincell{33.3}   %
& \plaincell{0.0}    %
& \plaincell{66.7}   %
& \plaincodecell{88.9} \\ %
& Soccer Push
& \plaincell{13.3}   %
& \plaincell{20.0}   %
& \plaincell{0.0}    %
& \plaincell{40.0}   %
& \plaincell{60.0}   %
& \plaincodecell{60.0} \\ %
\midrule
& \textbf{Average}
& \plaincell{37.8}   %
& \plaincell{17.8}
& \plaincell{26.9}
& \plaincell{59.5}
& \plaincell{79.2}
& \plaincodecell{89.9} \\
\bottomrule
\end{tabular}
\end{table*}

\FloatBarrier

\section{Experiment Details}
\label{app:exp_details}

\subsection{Hardware and Control Setup}
\begin{figure*}[t]
    \centering
    \makebox[\textwidth][c]{%
        \includegraphics[
            width=\textwidth
        ]{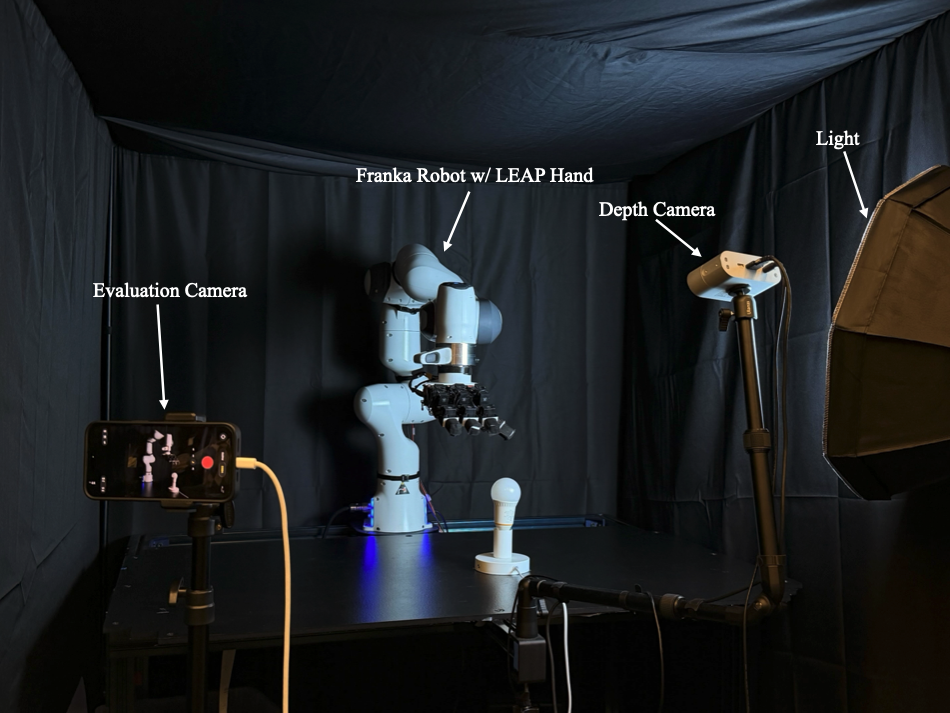}    
    }
    \caption{\small \textbf{Real-world Robot Setup}}
    \label{fig:real_world_setup}
\end{figure*}

Figure~\ref{fig:real_world_setup} shows our real-world experimental setup. We use a Franka Research 3 (FR3) robot arm equipped with a LEAP hand, constructed following the hardware setup provided by \cite{shaw2023leaphandlowcostefficient}. The workspace is observed by a single external Orbbec Femto Bolt depth camera mounted facing the front of the robot.

We deploy all policies using the DROID \cite{khazatsky2025droidlargescaleinthewildrobot} stack, which builds on Polymetis \cite{Polymetis2021}, and run closed-loop inference at 10~Hz. At each control step, the policy receives the current observation and predicts absolute joint position targets, which are executed by the robot's joint-impedance controller.

For 3D perception, we first crop the raw depth point cloud to a predefined workspace bounding box containing the table and relevant task objects. We then remove the tabletop and background points using RANSAC-based plane fitting and express the remaining points in the robot base frame. This filtered point cloud is used as the input to the policy.

\subsection{Dexterous Manipulation Tasks}
\label{app:dexterous_tasks}
Shown in Figure \ref{fig:all_tasks}, we evaluate on a suite of eight dexterous manipulation tasks designed to stress different aspects of multi-fingered control, including precise grasping, repeated in-hand motion, non-prehensile interaction, and secondary object interactions. We provide additional videos and qualitative examples on the project website:
\url{https://weirdlabuw.github.io/score/}.

\subsection{Data Collection}
\label{app:data_collection}

We collect real-world demonstrations using an Apple Vision Pro teleoperation
interface. The system tracks the operator's hand motion and end-effector motion
using keypoints, and retargets these motions to the Franka arm and LEAP hand.
This allows an operator to provide coordinated arm and finger commands for
dexterous, contact-rich manipulation tasks.

The amount of data varies by task. Table~\ref{tab:demo_counts} reports the number of real-world demonstrations collected for each task. Furthermore, for simple grasping tasks, suboptimal behavior followed by retry data was added to encourage retry behavior depicted in Figure \ref{fig:retry_data}.

\begin{figure*}[t]
    \centering
    \makebox[\textwidth][c]{%
        \includegraphics[
            width=1.1\textwidth
        ]{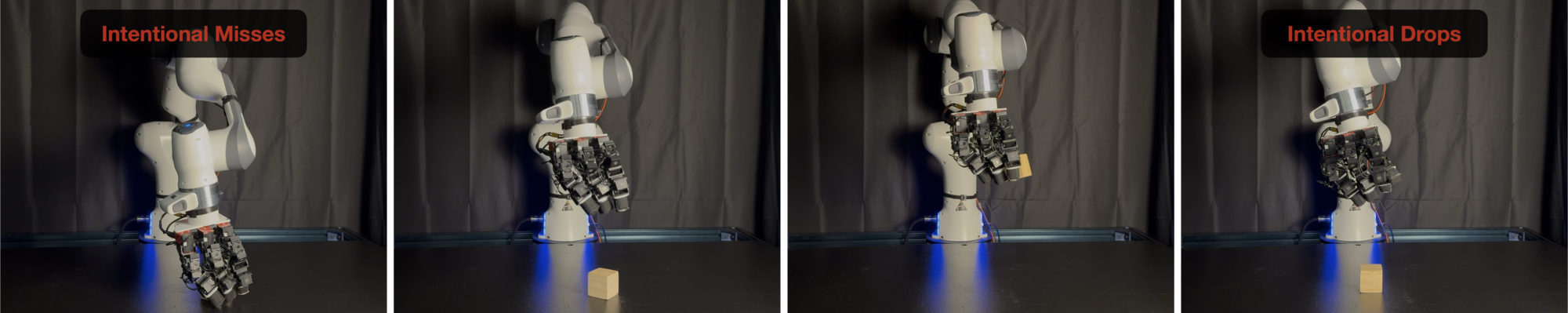}
    }
    \caption{\small \textbf{Retry Data Collection.} For the Cube Pinch and Bottle Grasp tasks, drops and misses followed by retries are captured in the dataset to encourage learning retry behavior.}
    \label{fig:retry_data}
\end{figure*}

\begin{table}[h]
    \centering
    \caption{Number of real-world demonstrations collected for each task.}
    \label{tab:demo_counts}
    \small
    \setlength{\tabcolsep}{6pt}
    \renewcommand{\arraystretch}{1.15}
    \begin{tabularx}{0.82\linewidth}{@{}l c X@{}}
        \toprule
        \textbf{Task} &
        \textbf{\# Demos} &
        \textbf{Task Type} \\
        \midrule

        Bottle Grasp &
        100 &
        Multi-finger grasping and lifting. \\

        Cup Grasp &
        100 &
        Side-grasping and lifting. \\

        Cube Pinch &
        100 &
        Precise fingertip pinch grasping. \\

        Lightbulb Screw &
        90 &
        Repeated contact and twisting. \\

        Credit Card Pick &
        130 &
        Thin-object grasping using contact with the shelf. \\

        Dishrack Place &
        100 &
        Precise object placement. \\

        Ball Pour &
        100 &
        Dynamic object manipulation with a ball. \\

        Soccer Push &
        170 &
        Non-prehensile pushing with retry behavior. \\

        \midrule
        \textbf{Total} &
        \textbf{890}  \\

        \bottomrule
    \end{tabularx}
\end{table}

\newpage
\subsection{Rewards}
\label{app:rewards}
We use minimal reward engineering across all tasks. The task-specific reward for each task is a sparse signal equal to the same success condition used for evaluation, with no dense shaping or per-task tuning. The only additional terms are lightweight joint-velocity and action-rate penalties that are shared across every task. Table~\ref{tab:reward_terms} lists the reward and success criterion for each task.

\begin{table}[ht]
\centering
\small
\setlength{\tabcolsep}{4pt}
\renewcommand{\arraystretch}{1.15}
\caption{\footnotesize Task-specific reward terms and success criteria. All tasks additionally use shared lightweight joint-velocity and action-rate penalties.}
\label{tab:reward_terms}
\begin{tabularx}{\textwidth}{@{}p{2.6cm}p{5.7cm}X@{}}
\toprule
\textbf{Task} & \textbf{Task-specific reward} & \textbf{Success criterion} \\
\midrule
Bottle Grasp &
$10.0\,r_{\mathrm{success}}$: lift bottle to target height. &
Object height $z \geq 0.25$m. \\

Cup Grasp &
$10.0\,r_{\mathrm{success}}$: lift cup to target height. &
Object height $z \geq 0.20$m. \\

Cube Pinch &
$10.0\,r_{\mathrm{success}}$: lift cube to target height. &
Object height $z \geq 0.20$m. \\

Lightbulb Screw &
$10.0\,r_{\mathrm{success}}$: rotate lightbulb to the stable on-state. &
Cumulative lightbulb rotation exceeds $2\pi$. \\

Credit Card Pick &
$10.0\,r_{\mathrm{success}}$: lift card to target height. &
Card height $z \geq z_{\mathrm{spawn}} + 0.18$m. \\

Dishrack Place &
$50.0\,r_{\mathrm{success}}$: place plate in rack slot with desired pose.
$2.0\,r_{\mathrm{return\_home}}$: return toward home configuration after placement. &
Plate is inside the rack slot with desired position and orientation. \\

Ball Pour &
$10.0\,r_{\mathrm{ball\_in\_cup}}$: place ping-pong ball inside big cup. &
Ball is inside big cup: XY distance within $0.05$m of cup center and Z position within cup height $0.12$m. \\

Soccer Push &
$10.0\,r_{\mathrm{ball\_in\_goal}}$: push ball inside goal bounding box. &
Ball is inside goal box: relative to goal frame, $x \in [-0.07,0.07]$, $y \in [-0.03,0.03]$, $z \in [0.01,0.08]$. \\
\bottomrule
\end{tabularx}
\end{table}

\subsection{Evaluation Protocol}
\label{app:evaluation_protocol}

For each task, we evaluate policies under randomized initial conditions rather
than a single fixed reset. At the beginning of each episode, task-relevant object
poses are sampled uniformly from task-specific reset ranges. 
Table~\ref{tab:eval_reset_randomization} compares the reset distribution used in
simulation with the reset distribution used during real-world evaluation.

The simulation reset distribution is generally broader than the real-world
evaluation reset distribution. In simulation, we can randomize object poses,
scene elements, and physical parameters at scale. In the real world, resets are
performed manually and are constrained by the physical workspace and by what can
be reliably initialized by the operator. Unless otherwise specified, pose offsets
are sampled uniformly from the intervals in
Table~\ref{tab:eval_reset_randomization}.

We report task-specific success rates in Table~\ref{tab:real_success}. In Fig.~\ref{fig:real_success_bar}, which shows the average real-world performance of our policy across all tasks, we use the normal approximation to compute 95\% confidence intervals.

\begin{table*}[t]
\centering
\small
\setlength{\tabcolsep}{4pt}
\renewcommand{\arraystretch}{1.15}
\begin{tabularx}{\textwidth}{@{}l X X@{}}
\toprule
\textbf{Task} &
\textbf{Simulation reset randomization} &
\textbf{Real evaluation reset randomization} \\
\midrule

Bottle Grasp &
Bottle pose is centered at $(0.50, 0.00, 0.11)$ with
$\Delta x=\pm 0.20$m, $\Delta y=\pm 0.30$m, and yaw sampled from
$\{0,\pi\}$. &
Evaluated over 24 trials. Randomization is same as simulation. \\

Cup Grasp &
Cup pose is centered at $(0.525, 0.10, 0.07)$ with
$\Delta x\in[-0.075,0.175]$m and $\Delta y=\pm 0.10$m. &
Evaluated over 15 trials. Cup is randomized only along the $x$ axis. The real
reset range is narrower than simulation, with approximately $0.075$m removed
from each side of the simulated $x$ range, and no $y$ randomization. \\

Cube Pinch &
Cube pose is centered at $(0.50, 0.00, 0.07)$ with
$\Delta x=\pm 0.20$m and $\Delta y=\pm 0.20$m. &
Evaluated over 20 trials. Randomization is same as simulation. \\

Lightbulb Screw &
Lightbulb pose is centered at $(0.545, 0.13, 0.00)$ with
$\Delta x=\pm 0.02$m, $\Delta y=\pm 0.03$m, and
$\Delta z\in[-0.01,0.02]$m. The bulb is initialized roughly $1.5$ turns away
from completion. &
Evaluated over 8 trials. Lightbulb pose is fixed in the $x$ and $y$ directions
during real evaluation; we unscrew the lightbulb one full rotation away from the
stable yellow light. \\

Dishrack Place &
Plate pose is centered at $(0.58, -0.12, 0.05)$ with
$\Delta x=\pm 0.055$m and $\Delta y\in[-0.048,0.05]$m. Rack pose is centered
at $(0.57, 0.14, 0.00)$ with $\Delta x=\pm 0.02$m and
$\Delta y=\pm 0.02$m. &
Evaluated over 20 trials. Plate pose is randomized over the same range as
simulation. The rack pose is fixed during real evaluation. \\

Credit Card Pick &
Shelf pose is centered at $(0.55, -0.20, 0.00)$ with
$\Delta x\in[-0.05,0.10]$m, $\Delta y=\pm 0.03$m, and
$\Delta z\in[-0.01,0.03]$m. The card receives an additional $x$ offset in
$[-0.05,0.00]$m on the shelf. &
Evaluated over 10 trials. The card is fixed during real evaluation; we do not
randomize the card pose. \\

Ball Pour &
Small cup pose is centered at $(0.54, -0.115, 0.07)$ with
$\Delta x=\pm 0.08$m and $\Delta y=\pm 0.04$m. Big cup pose is centered at
$(0.55, 0.12, 0.07)$ with $\Delta x=\pm 0.08$m and
$\Delta y=\pm 0.08$m. &
Evaluated over 9 trials. The source cup is randomized only along the $x$ axis.
The target cup is fixed during real evaluation. \\

Soccer Push &
Ball pose is centered at $(0.53, -0.125, 0.035)$ with
$\Delta x=\pm 0.15$m and $\Delta y\in[-0.15,0.10]$m. &
Evaluated over 15 trials. Randomization is same as simulation. \\

\bottomrule
\end{tabularx}
\vspace{-0.05in}
\caption{\small
\textbf{Simulation and real-world evaluation reset distributions.}
We evaluate policies under randomized initial conditions. All reported poses are
in the robot base frame. Simulation generally uses a broader reset distribution,
while real-world evaluation uses the same task setup as data collection with
restricted object and scene randomization.}
\label{tab:eval_reset_randomization}
\end{table*}

\newpage

\subsection{Baseline Implementations}
\label{app:baselines}

\paragraph{FPO~\cite{mcallister2025flowmatchingpolicygradients, yi2026flowpolicygradientsrobot}.}
We adapt the FPO++ implementation from~\cite{yi2026flowpolicygradientsrobot} to fine-tune our pre-trained flow matching behavior cloning policy. The PointNet observation encoder is frozen, and RL gradients are applied only to the UNet denoising network. We use the same asymmetric actor-critic setup as \ours{}: the actor receives deployment-compatible point cloud observations, while a separate MLP critic with hidden layers $[256,128,64]$ and ELU activations receives privileged simulation state.

Following FPO++, we replace the standard PPO likelihood ratio with a CFM-loss-based policy ratio,
\[
\rho =
\exp\left(
\mathcal{L}^{\mathrm{old}}_{\mathrm{CFM}}
-
\mathcal{L}^{\mathrm{new}}_{\mathrm{CFM}}
\right),
\]
where $\mathcal{L}^{\mathrm{old}}_{\mathrm{CFM}}$ is computed using the frozen policy from the previous update and $\mathcal{L}^{\mathrm{new}}_{\mathrm{CFM}}$ is computed using the current policy. We then optimize the PPO clipped surrogate objective with the asymmetric trust region variant used in the reference implementation. Unless otherwise stated, we use separate Adam optimizers for the actor and critic with learning rates $3 \times 10^{-5}$ and $1 \times 10^{-3}$, respectively, clip range $\epsilon = 0.01$, GAE with $\gamma = 0.99$ and $\lambda = 0.95$, and 4096 parallel environments with 200 rollout steps per update.

We also attempted BC regularization, which anchors the fine-tuned UNet to the frozen behavior cloning policy by penalizing deviations in CFM loss on real demonstration data. We found that this regularization introduced a sensitive tradeoff: large coefficients prevented meaningful policy improvement, while small coefficients behaved similarly to unregularized FPO. Since no tested coefficient improved the final tradeoff between simulation improvement and real-world transfer, we report unregularized FPO as the baseline. 

We swept 150 FPO configurations across the axes in Table~\ref{tab:fpo_sweep}. CFM sample counts above 8 required reducing rollout length or minibatch size to stay within memory budget. More than 5 PPO epochs per update did not improve performance despite FPO++ using 16--32 for locomotion. No amount of BC regularization coefficient improved over unregularized FPO, so we omit it from main comparisons. Our final hyperparameters are reported in Table~\ref{tab:baseline_hparams}.

\paragraph{RialTo~\cite{torne2024reconcilingrealitysimulationrealtosimtoreal}.}
We implement a RialTo-style real-to-sim-to-real baseline. For each task, we scan the deployment environment into simulation and apply inverse distillation: a perceptual policy trained on real demonstrations is rolled out in the scanned simulator, and successful rollouts are recorded with privileged state (joint state, object pose, task-specific features) to produce a privileged demonstration buffer.

We train the privileged teacher in two stages. First, we behavior-clone a state-based MLP policy on the inverse-distilled demonstrations using MSE loss. Second, we initialize a PPO policy from the BC checkpoint and fine-tune it in simulation with a BC-regularized objective:
\[
\mathcal{L}_{\mathrm{BC}} = \left\| \mu_{\pi}(s_{\mathrm{demo}}) - a_{\mathrm{demo}} \right\|_2^2,
\]
where $\mu_{\pi}$ is the Gaussian policy mean. Unlike the original RialTo, we use a continuous joint-action space. We find 30 critic-only warmup rollouts necessary to prevent collapse of the base policy before enabling policy-gradient updates. We additionally run a sweep over the BC loss coefficient, finding 10k to be the best value, as shown in Fig.~\ref{fig:distributional_tradeoff_empirical}.  All hyperparameters are reported in Table~\ref{tab:baseline_hparams}.

\paragraph{Residual RL.}
We compare against a Residual RL baseline implemented by removing the noise prediction branch from \ours{} while keeping all other components unchanged. The actor receives the same observations, the critic uses the same privileged state, and training uses the same RL objective, optimizer settings, and rollout configuration.

\clearpage

\begin{table}[h]
    \centering
    \caption{Hyperparameters for the RialTo and FPO baselines.}
    \label{tab:baseline_hparams}
    \begin{minipage}[t]{0.45\linewidth}
        \centering
        \textit{RialTo}
        \vspace{0.5em}
        \begin{tabular}{ll}
            \toprule
            \textbf{Parameter} & \textbf{Value} \\
            \midrule
            \multicolumn{2}{l}{\textit{Stage 1: Inverse Distillation}} \\
            \midrule
            Policy class          & MLP \\
            Hidden layers         & {[}512, 512, 512{]} \\
            Activation            & Tanh \\
            Action space  & Continuous joints \\
            Loss                  & MSE \\
            Optimizer             & Adam \\
            Learning rate         & $3\times10^{-4}$ \\
            LR schedule           & Cosine annealing \\
            Training epochs       & 500 \\
            \midrule
            \multicolumn{2}{l}{\textit{Stage 2: PPO Fine-tuning}} \\
            \midrule
            Policy class            & Gaussian MLP \\
            Hidden layers           & {[}512, 512, 512{]} \\
            Learning rate           & $1\times10^{-4}$ \\
            Clip range              & $0.2$ \\
            Discount $\gamma$       & $0.99$ \\
            GAE $\lambda$           & $0.95$ \\
            Value loss coefficient  & $0.1$ \\
            BC loss coefficient     & $10000$ \\
            Initial $\log\sigma$    & $-3.0$ \\
            Parallel environments   & $4096$ \\
            Rollout steps           & $200$ \\
            Batch size              & $4096$ \\
            PPO epochs per update   & $1$ \\
            Critic warmup rollouts  & $30$ \\
            \bottomrule
        \end{tabular}
    \end{minipage}
    \hspace{0.04\linewidth}
    \begin{minipage}[t]{0.45\linewidth}
        \centering
        \textit{FPO}
        \vspace{0.5em}
        \begin{tabular}{ll}
            \toprule
            \textbf{Parameter} & \textbf{Value} \\
            \midrule
            \multicolumn{2}{l}{\textit{Architecture}} \\
            \midrule
            Actor observation       & PCD (deployment) \\
            Critic observation      & Privileged state \\
            PointNet encoder        & Frozen \\
            Denoising network       & UNet (fine-tuned) \\
            Critic hidden layers    & {[}256, 128, 64{]} \\
            Critic activation       & ELU \\
            \midrule
            \multicolumn{2}{l}{\textit{Training}} \\
            \midrule
            Policy ratio            & CFM-loss-based $\rho$ \\
            Trust region            & Asymmetric PPO clip \\
            Actor optimizer         & Adam \\
            Critic optimizer        & Adam \\
            Actor learning rate     & $3\times10^{-5}$ \\
            Critic learning rate    & $1\times10^{-3}$ \\
            Clip range $\epsilon$   & $0.01$ \\
            Discount $\gamma$       & $0.99$ \\
            GAE $\lambda$           & $0.95$ \\
            Parallel environments   & $4096$ \\
            Rollout steps           & $200$ \\
            BC regularization       & None \\
            \bottomrule
        \end{tabular}
    \end{minipage}
\end{table}

\begin{table}[h]
    \centering
    \caption{FPO hyperparameter sweep}
    \label{tab:fpo_sweep}
    \begin{tabular}{ll}
        \toprule
        \textbf{Parameter} & \textbf{Values Swept} \\
        \midrule
        Actor learning rate      & $\{1\times10^{-5},\,3\times10^{-5}\}$ \\
        Critic learning rate     & $\{1\times10^{-4},\,1\times10^{-3}\}$ \\
        CFM samples per action   & $\{1,4,8,16,32\}$ \\
        PPO epochs per update    & $\{1,5,10,16,32\}$ \\
        Trust region mode        & asymmetric PPO, standard clip, CFM-loss clamp, \\
                                 & neg.-advantage clamp, ST-estimator log-ratio clamp \\
        Critic warmup updates    & $\{0,1,10,30\}$ \\
        BC regularization coeff. & $\{0,0.001,0.002,0.005,0.01,0.02,0.05,$ \\
                                 & $0.1,0.2,0.3,0.5,1.0,1.5,3.0,5.0\}$ \\
        Target KL                & $\{0.1,\,0.5\}$ \\
        Optimizer                & Adam (wd$=0$),\ AdamW (wd$=1\times10^{-6}$) \\
        \midrule
        \multicolumn{2}{l}{\textit{Fixed}} \\
        \midrule
        Clip range $\epsilon$    & $0.01$ \\
        Discount $\gamma$        & $0.99$ \\
        GAE $\lambda$            & $0.95$ \\
        Max gradient norm        & $1.0$ \\
        \bottomrule
    \end{tabular}
\end{table}

\newpage
\section{Qualitative Analysis}
\label{app:qualitative}

\subsection{Single-task policy improvement and failure modes}
\label{app:qualitative_single_task}

We provide qualitative examples of common failure modes for the baselines and our method. These examples illustrate that high success in simulation does not necessarily imply safe or reliable real-world deployment. We observe two broad failure patterns. Weakly constrained methods can improve aggressively in simulation by exploiting simulator-specific dynamics or drifting away from the real-world behavior distribution. More conservative methods can preserve transferability, but may also preserve slow, imprecise, or suboptimal behaviors from the base policy. In contrast, \ours{} targets a middle ground: it improves the policy in simulation while constraining the learned behavior to remain within the action support of the real-world base policy.

\paragraph{FPO.}
FPO often improves simulated success by moving outside the support of the real-world base policy. This can produce behaviors that appear effective in simulation but are brittle or unsafe on hardware. In the Lightbulb Screw task shown in Fig.~\ref{fig:fpo_failures}, the policy learns an abnormal twisting strategy that applies excessive force to the lightbulb fixture, causing the stand to be pulled off the table rather than reliably rotating the bulb. In Soccer Push, FPO learns to strike the table and trap the ball using simulator-specific contact dynamics. While this behavior can achieve the simulated objective, it is unsafe to deploy on real hardware and does not reflect the intended task strategy. In Credit Card Pick, the policy learns to swipe the card off the shelf, exploiting differences between simulated and real contact dynamics. Similarly, in Bottle Grasp, we observe a smaller but still noticeable drift from the base behavior: the middle finger twists inward, producing grasps that are less stable in the real world. Across these examples, FPO's failures are consistent with a lack of support preservation: the optimized policy discovers high-reward behaviors in simulation that are not reliably transferable.

\paragraph{RialTo.}
RialTo is more conservative than FPO because it explicitly regularizes policy improvement toward the base behavior distribution. This makes it less likely than FPO to discover extreme simulator-specific reward-hacking strategies, but it also limits how much the policy can improve. As shown in Fig.~\ref{fig:rialto_fails}, RialTo can still produce small but harmful deviations from real-world behavior. In Soccer Push, the fingers drag on the table and damage the pinky finger. In Bottle Grasp, RialTo also learns to invert the middle finger in an unnatural way, producing unstable grasps on hardware. In Dishrack Place, RialTo learns reward-hacking behavior in simulation by moving the dishrack center close enough to the success region defined in Table~\ref{tab:reward_terms}, without actually placing the plate inside the dishrack. These examples reflect the central tradeoff of distributional regularization: weak regularization permits unsafe drift, while strong regularization preserves suboptimal behaviors from the base policy.

\paragraph{Residual RL.}
Residual policy methods improve the base policy by learning bounded corrective actions on top of the original behavior. This can help when the base policy already chooses a reasonable strategy and only requires local refinement. However, in our experiments, failures often occur when the base policy selects a poor or suboptimal behavior mode. In these cases, the bounded residual correction cannot steer the policy toward a substantially different strategy. This limitation is most visible in tasks where the base policy has low performance, such as Ball Pour and Dishrack Place. Residual RL can improve local precision, but it remains constrained by the behavior selected by the base policy.

\paragraph{SCORE.}
\ours{} can fail when the base policy's support does not contain a successful behavior for a task, as discussed in the limitations section (Sec.~\ref{sec:limitations}). In our experiments, however, the observed failures are primarily due to task-level precision requirements and contact-rich dynamics rather than large support drift. In Credit Card Pick, small differences in contact can cause the card to fall from the shelf, and successful grasping requires precise coordination between the fingers to slide under and lift the card. In Ball Pour, slight misalignment of the cup or pouring too quickly can cause the balls to spill outside the target container. In Soccer Push, the nontrivial dynamics of the squishy real-world ball make accurate pushing difficult, and overly hard or poorly aligned pushes can result in missed goals. In Dishrack Place, failures are rare, but can occur when the robot places the plate in an incorrect slot or contacts the dishrack geometry during insertion. Overall, \ours{} failures are mostly caused by contact sensitivity and task precision, rather than simulator-specific reward hacking or unsafe behaviors outside the real-world policy support.

\begin{figure*}[h]
    \centering
    \makebox[\textwidth][c]{%
        \includegraphics[
            width=1.1\textwidth
        ]{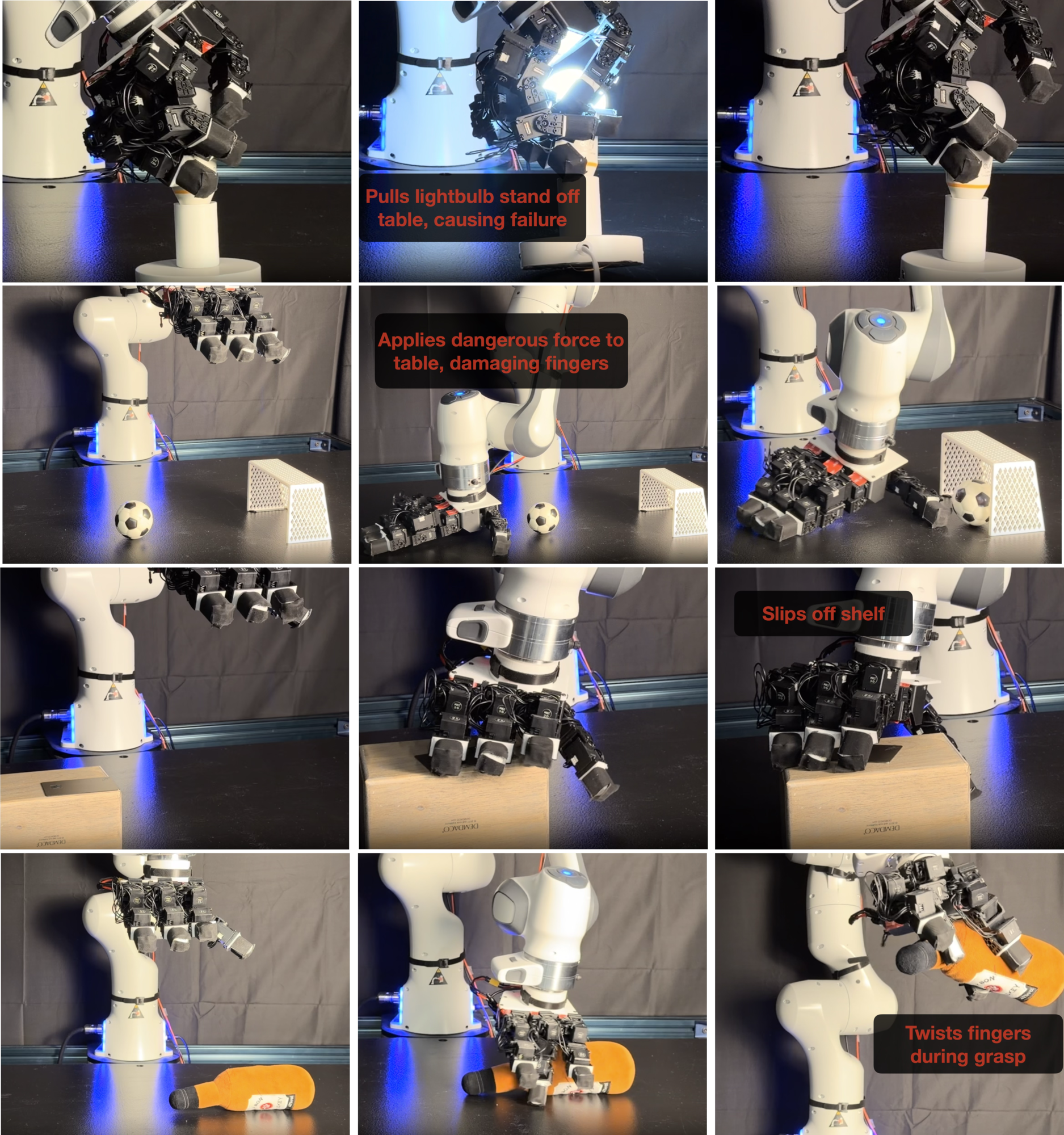}
    }
    \caption{\small \textbf{FPO failure modes.} FPO can exploit simulator-specific dynamics and drift outside the real-world policy support, producing unsafe or non-transferable behaviors.}
    \label{fig:fpo_failures}
\end{figure*}

\FloatBarrier

\begin{figure*}[t]
    \centering
    \makebox[\textwidth][c]{%
        \includegraphics[
            width=1.1\textwidth
        ]{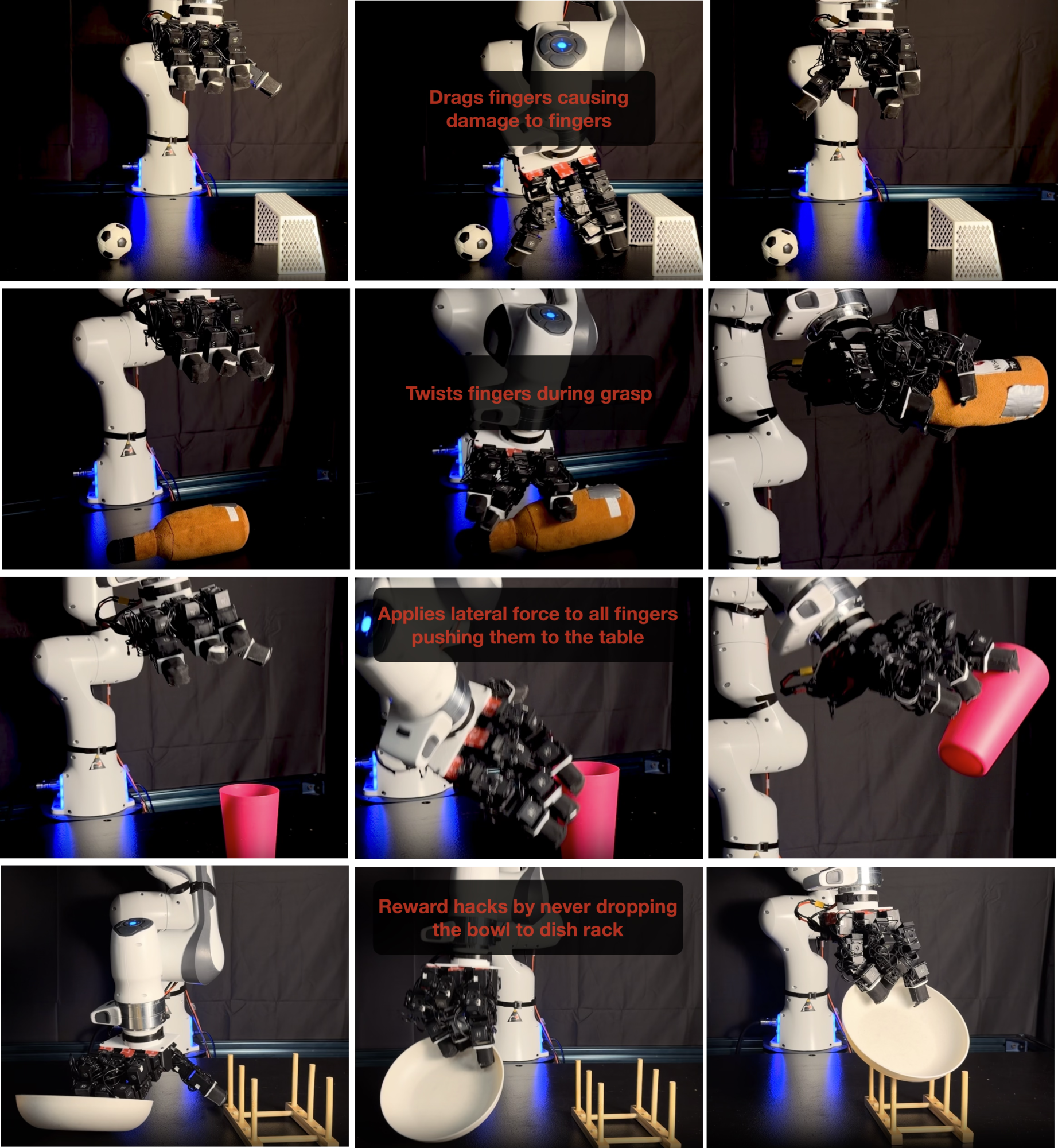}
    }
    \caption{\small \textbf{RialTo failure modes.} BC regularization can limit the amount of drift from the support of the real-world policy, but can also limit improvement and retain imprecise base-policy behaviors.}
    \label{fig:rialto_fails}
\end{figure*}

\FloatBarrier

\begin{figure*}[t]
    \centering
    \makebox[\textwidth][c]{%
        \includegraphics[
            width=1.1\textwidth
        ]{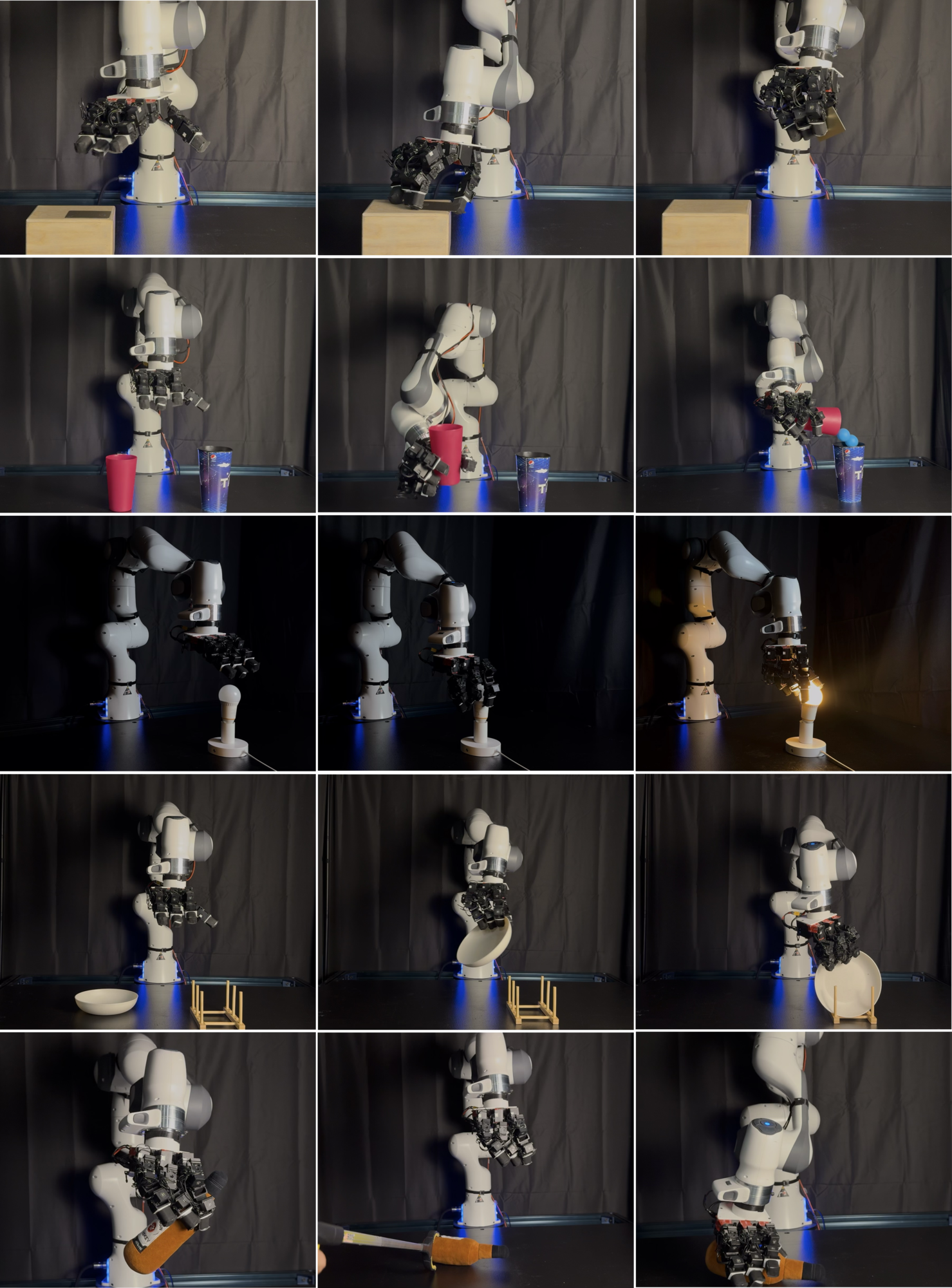}
    }
    \caption{\small \textbf{\ours{} successful rollouts.} \ours{} improves task performance while maintaining real-world-feasible behaviors within the base policy support.}
    \label{fig:code_successful}
\end{figure*}

\FloatBarrier

\begin{figure*}[t]
    \centering
    \makebox[\textwidth][c]{%
        \includegraphics[
            width=1.1\textwidth
        ]{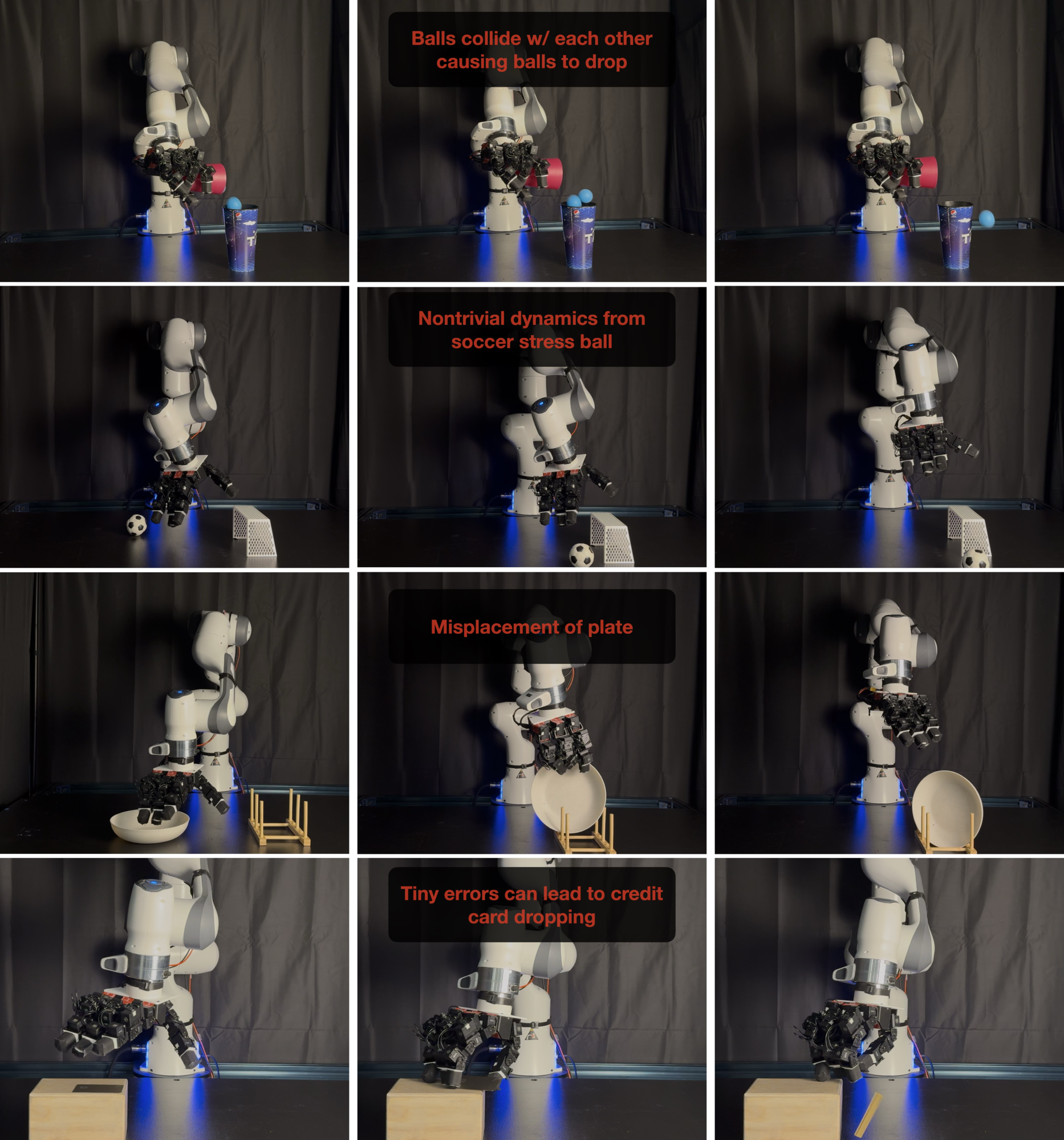}
    }
    \caption{\small \textbf{\ours{} failure modes.} \ours{} failures are primarily caused by contact sensitivity and task precision demands, rather than unsafe support drift.}
    \label{fig:code_failures}
\end{figure*}

\FloatBarrier

\subsection{Multi-task steering and cross-task behavior reuse}
\label{app:qualitative_multitask}

We also provide qualitative examples for the multi-task steering experiment. In this setting, a single base policy is trained on demonstrations from Cube Pinch, Bottle Grasp, and Credit Card Pick. This broader dataset expands the behavior support of the base policy, but also introduces interference between task-specific contact strategies. As shown in Fig.~\ref{fig:multitask_failures}, the base policy sometimes applies behaviors from one task in another setting, such as using Credit Card Pick-like high grasps for object grasping or using grasping motions that are poorly matched to the current object and reset distribution. This causes the multi-task base policy to underperform the corresponding single-task policies, despite being trained on a broader dataset.

After simulation steering, \ours{} can more reliably select task-appropriate behaviors from the shared prior. More importantly, the broader multi-task prior enables useful cross-task behavior reuse, as shown in Fig.~\ref{fig:multi_task_score}. For example, when Cube Pinch is evaluated under the wider Bottle Grasp reset distribution, single-task \ours{} often lacks sufficient support for the broader object locations, while multi-task \ours{} can reuse the wider grasping coverage learned from Bottle Grasp. Similarly, contact-rich sliding behavior from Credit Card Pick can help recover cube grasps outside the original Cube Pinch randomization. These examples suggest that multi-task data can expand the support available to \ours{}, but that steering is necessary to exploit this broader prior reliably.

\begin{figure*}[h]
    \centering
    \makebox[\textwidth][c]{%
        \includegraphics[
            width=1.1\textwidth
        ]{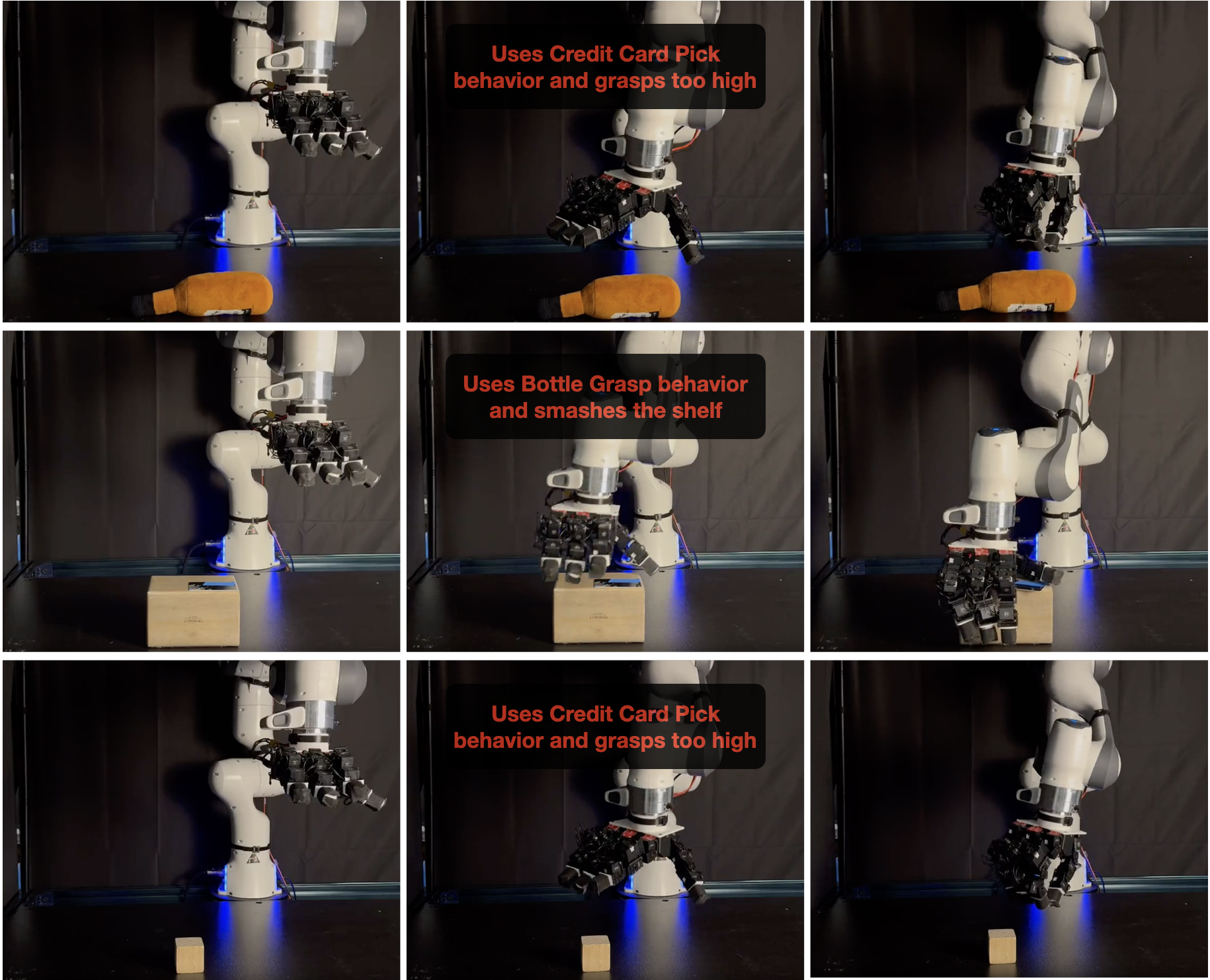}
    }
    \caption{\small \textbf{Multi-task base policy failure modes.}
Although multi-task training expands the behavior support of the base policy, direct deployment can suffer from interference between task-specific behaviors. The multi-task base policy sometimes applies behavior modes from the wrong task, such as Credit Card Pick-like high grasps during object grasping or Bottle Grasp-like motions during Credit Card Pick, leading to unstable grasps or collisions with the shelf.}
\label{fig:multitask_failures}
\end{figure*}

\begin{figure*}[h]
    \centering
    \makebox[\textwidth][c]{%
        \includegraphics[
            width=1.1\textwidth
        ]{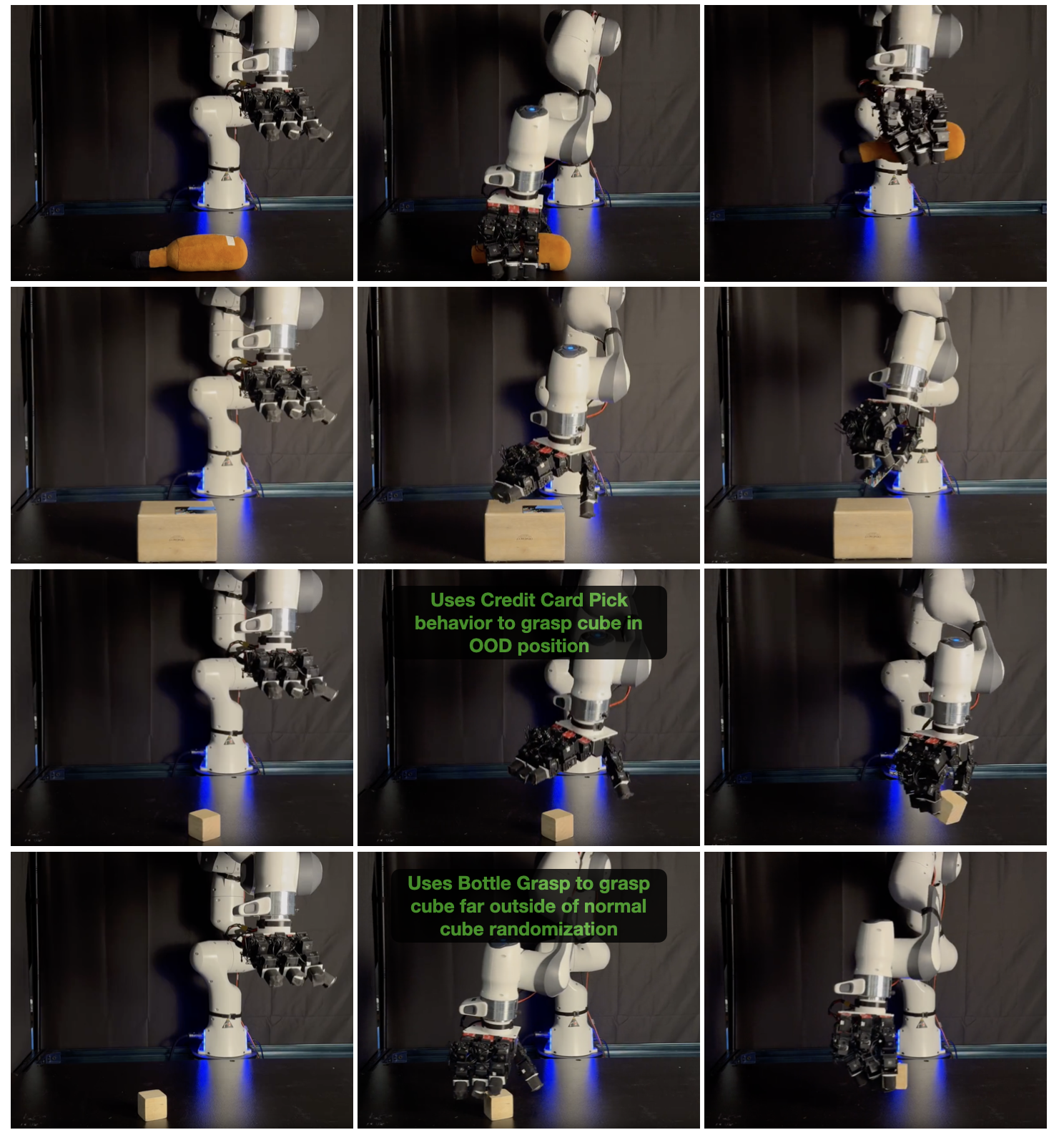}
    }
    \caption{\small \textbf{Multi-task SCORE enables cross-task behavior reuse.}
After steering a shared multi-task prior, \ours{} can select task-appropriate behaviors while also reusing useful strategies across tasks. The top two rows show successful Bottle Grasp and Credit Card Pick executions. The bottom rows show Cube Pinch under broader object placements, where multi-task \ours{} can reuse Credit Card Pick-like sliding behavior and Bottle Grasp-like wide-reset grasping behavior to recover cubes outside the original Cube Pinch randomization.}
    \label{fig:multi_task_score}
\end{figure*}

\section{Real-to-Sim-to-Real Pipeline}
\label{app:sim_details}

To ensure reproducibility, we provide full details of our real-to-sim-to-real pipeline, including model architecture, simulation design, and considerations for real world deployment. Our base policy is a conditional flow matching model trained on real-world demonstrations collected by a human. The policy uses a simple PointNet encoder together with a U-Net to parameterize the velocity field. It is conditioned on the robot joint positions and a point cloud observation of the scene, and predicts absolute joint position targets. We then run RFS in simulation using sparse rewards to improve the base policy, and finally evaluate the improved policy in the real world.

\subsection{Observation Space}
\label{app:obs_space}
To enable efficient policy improvement, our goal is to run massively parallel RL using the same observation space as the base policy. Direct vision-based observations, such as RGB or depth images, require computationally expensive rendering in simulation. In practice, we find that on a single NVIDIA L40S GPU, we can run at most 512 parallel environments with RGB or depth-image rendering when training simple MLP policies, and environment steps are significantly slower when performing rendering.

Instead, we sample points directly from the environment meshes and transform them using the ground-truth frame transforms of the robot and objects throughout the episode, as described by \cite{qi2017pointnetdeeplearningpoint}. This gives the policy access to privileged point cloud observations in simulation while preserving the same point cloud observation structure used by the real-world policy. This representation allows us to run up to 4096 parallel environments per GPU with faster environment steps, substantially reducing wall-clock training time.

The full observation consists of the absolute joint positions of the robot's 23 joints, with 7 joints for the Franka arm and 16 joints for the LEAP hand, together with a point cloud observation of the scene. In the real world, depth observations are collected using an Orbbec Femto Bolt depth camera and projected into the robot base frame to obtain a point cloud, which is cropped to the robot workspace. To account for non-Markovian effects such as robot velocity, we provide each policy with a history of the three previous proprioception observations, the three previous actions, and the current point cloud. We do not include previous point clouds in the history.

\subsection{Action Space}

The policy predicts action chunks of length 16, where each action is an absolute joint target for all 23 robot joints. These targets are passed to a joint PD controller for execution. We find that absolute joint targets transfer better to the real world because closed-loop feedback from the controller can compensate for small deviations in physical parameters, such as armature, damping, and joint friction.

\subsection{Flow Policy Architecture and Training}
\label{app:flow_policy}

We parameterize the velocity field of the flow matching model using a 1D U-Net over action sequences. The input to the U-Net is a noisy action trajectory of horizon \(H=16\), where each action has dimension 23. We condition the U-Net on three sources of information: a point cloud encoding of the scene, a history of robot joint positions, and a history of previous actions. We encode the scene using a lightweight PointNet encoder applied to a 512-point XYZ point cloud. We then concatenate the resulting global point cloud feature with the flattened proprioceptive history and action history, and inject this conditioning vector into the U-Net through FiLM conditioning.

We train the model on real-world human demonstrations using the standard CFM objective. We normalize all joint position inputs and action targets using per-dimension statistics computed from the training set, while keeping point cloud coordinates in metric units in the robot base frame. At inference time, we solve the learned flow using 5 integration steps and execute the resulting absolute joint targets through a joint PD controller.  During simulation policy improvement, and at deployment, we run closed loop inference every step, executing only the first action from the chunk.

\begin{table}[h]
    \centering
    \caption{Base flow policy architecture and training hyperparameters.}
    \label{tab:flow_policy_hparams}
    \begin{tabular}{ll}
        \toprule
        \textbf{Parameter} & \textbf{Value} \\
        \midrule
        Policy class & Conditional flow matching \\
        Action representation & Absolute joint position targets \\
        Action dimension & 23 \\
        Prediction horizon & 16 \\
        Observation history & 4 robot states \\
        Action history & 3 previous actions \\
        Point cloud size & 512 XYZ points \\
        Point cloud encoder & PointNet \\
        Sequence model & 1D U-Net \\
        U-Net channels & [256, 512, 1024] \\
        Conditioning & FiLM scale and shift \\
        Inference steps & 5 \\
        Batch size & 256 \\
        Optimizer & Adam \\
        Learning rate & \(1\times 10^{-4}\) \\
        Weight decay & \(1\times 10^{-6}\) \\
        Learning rate schedule & Cosine decay with 500 warmup steps \\
        Training epochs & 1000 \\
        Validation split & 0.05 \\
        \bottomrule
    \end{tabular}
\end{table}

\subsection{Environment Generation}

We construct simulation environments in IsaacLab to match the corresponding real-world task setups. We first combine the Franka arm and LEAP hand URDFs into a single Franka-LEAP robot model. We obtain object assets by scanning real-world objects using ARCode, an off-the-shelf object scanning application that runs on depth-enabled devices such as recent iPhone and iPad Pro models. For tasks involving defective or modified objects, we manually edit the scanned meshes in Blender. For articulated objects, we use Isaac Sim to define the corresponding joints and articulation properties.

After importing the robot and object assets, we configure the simulated scene to match the real-world environment as closely as possible, including object placement, task-relevant geometry, and articulation limits.
\subsection{Domain Randomization}

A major advantage of simulation is that we can vary physical properties, object initializations, and reset conditions at scale. To obtain policies that are even more robust in the real world, we randomize object scale, mass, friction, and spawn locations during simulation training. We also introduce additional perturbations, such as resetting objects mid episode and applying external force disturbances, to encourage recovery and retry behaviors.

Because the simulation distribution is intentionally broader and more challenging than the real-world demonstration distribution, the base policies often achieve low initial success rates in simulation despite performing substantially better in the real world. We use this gap as an opportunity for policy improvement: \ours{} exposes the base policy to difficult randomized settings and uses sparse rewards to adapt the policy toward more robust behaviors before real-world deployment.

\subsection{Asymmetric Residual Flow Steering Implementation}

Another advantage of simulation is access to ground-truth state information during training. However, directly training a fully privileged policy would require an additional distillation stage before real-world deployment. To leverage privileged information while preserving deployment compatibility, \score{} uses RFS~\cite{su2026rfsreinforcementlearningresidual} with an asymmetric actor-critic architecture.

The actor remains non-privileged and only receives observations available at deployment. In particular, we reuse the frozen point cloud encoder from the base CFM policy: the current point cloud is encoded by the frozen PointNet into a 64-dimensional feature vector, which is concatenated with a 4-step history of robot joint positions and a 3-step history of previous actions.

RFS augments the base flow policy by learning steering actions in the flow sampling process. In our implementation, the PPO actor is a single Gaussian policy over a concatenated action vector
\[
\xi = [a_r, z],
\]
where \(a_r\) denotes the residual action component, and \(z\) denotes the noise/steering component. Although we refer to these two components separately, they are produced by a single MLP actor as one concatenated vector with a single diagonal Gaussian distribution. We therefore write the actor as
\[
\xi \sim \pi_{\mathrm{steer}}(\xi \mid o),
\]
rather than as two separate policies.

The critic is privileged and is used only during simulation training. It receives simulator state features that are unavailable or unreliable in the real world such as object pose, target pose, and other task-specific state terms. We exclude raw visual observations such as point clouds or RGB images from the critic input. This asymmetric design allows the value function to use ground-truth simulator information for lower-variance advantage estimation, while keeping the actor restricted to deployment-compatible observations.

We estimate advantages using GAE with the privileged critic \(V_\psi(s)\), and update the actor using the standard clipped PPO objective:
\[
J_{\mathrm{actor}}(\theta)
=
\mathbb{E}_{(o,s,\xi) \sim \mathcal{B}}
\left[
\min\!\left(
  \rho_{\theta}\,\hat{A},\;
  \mathrm{clip}(\rho_{\theta}, 1-\varepsilon, 1+\varepsilon)\,\hat{A}
\right)
\right],
\]
where
\[
\rho_{\theta}
=
\frac{\pi_{\mathrm{steer}}(\xi \mid o)}
     {\pi_{\mathrm{steer}}^{\mathrm{old}}(\xi \mid o)}
\]
is the standard PPO importance ratio over the joint concatenated action \(\xi=[a_r,z]\), and \(\hat{A}\) is the GAE advantage estimate computed using the privileged critic. Since only the critic depends on privileged simulator state, the learned actor can be deployed directly in the real world without distillation.

\subsection{Deployment of \ours{} in the Real World}

At deployment time, we directly use the actor learned during simulation together with the original frozen base policy. The learned RFS actor predicts a concatenated steering action consisting of a noise component and a residual component at test time conditioned on the same observations as the base policy.

We use the predicted noise component as the input noise for the base CFM sampler. After integrating the learned velocity field to denoise the action trajectory, we add the predicted residual component to the resulting action. This produces a modified action trajectory that remains anchored to the base policy while allowing the RFS actor to steer the generated behavior toward higher-reward actions.

Although the base policy predicts an action chunk of horizon \(H=16\), we execute only the first action in the predicted chunk on the real robot. We then replan at the next control step using the latest observation and history.

\section{Formal Analysis of Off-Domain Policy Improvement}
\label{app:proofs}

In this section, we investigate the limitations of unconstrained RL in simulation and distributional constraints in a principled setting, formalizing the assumptions necessary to guarantee improvement of \ours{} over a real-world base policy. For consistency with the figures in this section, we note that the real-world base policy $\pi_{\mathrm{base}}$ is equivalently denoted $\pi_{\mathrm{real}}$ in some illustrations; the two refer to the same policy.

\subsection{Problem Definition}

We begin by restating Definition~\ref{def:odpi}, the off-domain policy improvement objective:
\begin{definition}[Off-Domain Policy Improvement]
Given a policy $\pi_{\mathrm{base}}$ initially learned from interaction in $\mathcal{M}_{\mathrm{real}}$,  off-domain policy improvement is the problem of finding a policy $\hat{\pi}$ such that $J_{\mathrm{real}}(\hat{\pi}) > J_{\mathrm{real}}(\pi_{\mathrm{base}})$, using additional interaction only in $\mathcal{M}_{\mathrm{sim}}$.
\end{definition}

In Section~\ref{prelim:toy_example}, through analysis of a simple toy example in this setting, we concluded that unconstrained RL and distributionally-constrained RL are not sufficient for this problem. We arrived at Equation~\ref{eq:supp_constrained_objective}, the support-constrained policy optimization objective, which we restate here:
\begin{equation}
\hat{\pi} = \arg\max_{\pi} J_{\mathrm{sim}}(\pi) \quad \text{s.t.} \quad \mathrm{supp}(\pi) \subseteq  \mathrm{supp}(\pi_{\mathrm{base}})
\end{equation}

Below, we formalize the assumptions necessary to guarantee real-world policy improvement under this objective, and show that, under these assumptions, support-constrained optimization is sufficient to recover the in-support optimum for off-domain policy improvement.

\subsection{Realizability}

In this paper, we often refer to untransferable behaviors performed by unconstrained optimization in simulation. We formalize this through \textit{realizability} below. 

\begin{definition}[Realizability]\label{def:realizable}
A state-action pair $(s, a)$ is \textbf{realizable} in the real world if $\rho^{\pi}(s, a) > 0$ for some policy $\pi$, where $\rho^{\pi}$ denotes the state-action visitation measure under $\pi$ in $\mathcal{M}_{\mathrm{real}}$. By convention, unrealizable pairs yield zero reward and result in episode termination: $r(s,a) = 0$ whenever $(s,a)$ is unrealizable. A simulation policy $\pi_\mathrm{sim}$ is \textbf{realizable} if every $(s,a)$ in its simulation visitation support is realizable; otherwise $\pi$ is \textbf{unrealizable}, or equivalently, \textbf{untransferable}. 
\end{definition}

\subsection{Limitation of Unconstrained RL}
\begin{proposition}[Unconstrained RL in Simulation Can Result in Unrealizable Policies]\label{prop:unconstrained_failure}
There exists a pair of MDPs $\mathcal{M}_{\mathrm{sim}}$ and $\mathcal{M}_{\mathrm{real}}$ such that the optimal simulation policy $\pi_{\mathrm{sim}}^* = \arg\max_\pi J_{\mathrm{sim}}(\pi)$ is unrealizable.
\end{proposition}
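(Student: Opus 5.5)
The proof is an existence claim, so I will build an explicit small counterexample modeled on the toy example of Section~\ref{prelim:toy_example}. I take a finite MDP with state space $\mathcal{S}=\{s_1, s_{\mathrm{det}}, s_g\}$ and action space $\mathcal{A}=\{a_{\mathrm{direct}}, a_{\mathrm{detour}}\}$. The initial state is $s_1$, and $s_g$ is an absorbing goal. The reward is $r(s,a)=1$ exactly when the transition enters $s_g$ and $0$ otherwise, and $\gamma\in(0,1)$. The two domains share everything except the dynamics at $s_1$ under $a_{\mathrm{direct}}$, which plays the role of the barrier.

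In $\mathcal{M}_{\mathrm{sim}}$, $a_{\mathrm{direct}}$ moves $s_1\to s_g$ in one step. In $\mathcal{M}_{\mathrm{real}}$ I must encode the barrier so that $(s_1,a_{\mathrm{direct}})$ is unrealizable in the sense of Definition~\ref{def:realizable}. The issue is that $s_1$ is the initial state, so any policy putting mass on $a_{\mathrm{direct}}$ gives $\rho^\pi(s_1,a_{\mathrm{direct}})>0$. I therefore read realizability as a property of the real action set available at a state. Concretely, I specify that in $\mathcal{M}_{\mathrm{real}}$ the admissible action set at $s_1$ is $\{a_{\mathrm{detour}}\}$; the barrier physically blocks the direct motion. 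Then no real policy visits $(s_1,a_{\mathrm{direct}})$, so it is unrealizable. The paper's convention then sets its reward to $0$ and terminates the episode. In both domains $a_{\mathrm{detour}}$ moves $s_1\to s_{\mathrm{det}}$, and then any action moves $s_{\mathrm{det}}\to s_g$. If one insists on a shared action space, an alternative encoding is to attach a fictitious prior state whose only real-reachable successor forbids $a_{\mathrm{direct}}$. I would state whichever reading is cleaner and note that it matches the convention in Definition~\ref{def:realizable}.

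Next I compute the simulation values. Under $\pi_{\mathrm{direct}}$ the return is $J_{\mathrm{sim}}=1$. Under $\pi_{\mathrm{detour}}$ it is $J_{\mathrm{sim}}=\gamma<1$. Mixtures interpolate linearly in the probability placed on $a_{\mathrm{direct}}$ at $s_1$. So the unique maximizer $\pi^*_{\mathrm{sim}}$ is deterministic with $\pi^*_{\mathrm{sim}}(a_{\mathrm{direct}}\mid s_1)=1$, which I verify by a one-line Bellman comparison $Q_{\mathrm{sim}}(s_1,a_{\mathrm{direct}})=1>\gamma=Q_{\mathrm{sim}}(s_1,a_{\mathrm{detour}})$. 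Its simulation visitation support contains $(s_1,a_{\mathrm{direct}})$, which is unrealizable, so $\pi^*_{\mathrm{sim}}$ is unrealizable and the proposition follows. As a remark, I would add that $J_{\mathrm{real}}(\pi^*_{\mathrm{sim}})=0<\gamma=J_{\mathrm{real}}(\pi_{\mathrm{detour}})$, with $\pi_{\mathrm{base}}=\pi_{\mathrm{detour}}$. This shows unconstrained optimization can even degrade real performance.

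The main obstacle is the one noted above: making ``unrealizable'' non-vacuous at a state the real process actually visits, given the definition via visitation measures. The approach is to restrict the real admissible action set, which is exactly the role of the barrier. The rest is routine value comparison.
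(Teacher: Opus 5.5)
Your shortcut-versus-barrier picture is the right one, but you resolve the obstacle you identified by leaving the paper's framework, so the pair you build is not an instance of the proposition. In the problem setup, $\mathcal{M}_{\mathrm{sim}}$ and $\mathcal{M}_{\mathrm{real}}$ share $\mathcal{S}$, $\mathcal{A}$ and $r$, and differ only in $p_{\mathrm{sim}}$ versus $p_{\mathrm{real}}$.

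\begin{itemize}
\item Removing $a_{\mathrm{direct}}$ from the admissible set at $s_1$ in $\mathcal{M}_{\mathrm{real}}$ is not a change of dynamics. Re-reading realizability as a property of state-dependent action sets changes Definition~\ref{def:realizable} rather than satisfying it.
\item Your fallback (``a fictitious prior state whose only real-reachable successor forbids $a_{\mathrm{direct}}$'') fails for a general reason. The probability of first reaching $s$ does not depend on $\pi(\cdot\mid s)$. So if some real policy reaches $s$, mixing its action distribution at $s$ with any $a$ gives a policy $\pi'$ with $\rho^{\pi'}(s,a)>0$. With a shared finite action space, $(s,a)$ is therefore unrealizable exactly when $s$ is unreachable under $p_{\mathrm{real}}$. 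Forbidding an action at a state the real robot can reach can never produce an unrealizable pair.
\item There is also a reward conflict. You pay reward on the step out of $(s_1,a_{\mathrm{direct}})$, i.e.\ $r(s_1,a_{\mathrm{direct}})=1$ for the shared $r(s,a)$. But the convention in Definition~\ref{def:realizable}, which you invoke yourself, forces this value to $0$ once the pair is unrealizable. Then $J_{\mathrm{sim}}(\pi_{\mathrm{direct}})=0<\gamma$, the simulation optimum is the detour (which is realizable), and the counterexample collapses.
\end{itemize}

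The missing idea is to put the whole sim-to-real discrepancy in the transition kernel and route the simulated shortcut through a \emph{state} that no real policy reaches. This is in the spirit of the paper's five-state sketch, which keeps the action space shared and lets the shortest simulated path exploit a transition that leaves the real robot stuck. A concrete version:

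\begin{itemize}
\item Keep $a_{\mathrm{direct}}$ available at $s_1$ in both domains.
\item Let $p_{\mathrm{sim}}$ send $(s_1,a_{\mathrm{direct}})$ to a new state $s_{\mathrm{mid}}$ and then to $s_g$.
\item Let $p_{\mathrm{real}}$ send $(s_1,a_{\mathrm{direct}})$ to an absorbing trap $s_{\mathrm{stuck}}$, modeling the collision with the barrier.
\item Make the detour at least one step longer than this two-step shortcut.
\item Pay reward only at the goal, e.g.\ $r(s_g,\cdot)=1$ with $s_g$ absorbing. This pair is realizable because the detour reaches $s_g$ on hardware.
\end{itemize}

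Then $s_{\mathrm{mid}}$ is unreachable under $p_{\mathrm{real}}$, so every $(s_{\mathrm{mid}},a)$ is unrealizable and consistently carries zero reward. Your linear-in-the-mixture comparison still shows that every simulation maximizer takes $a_{\mathrm{direct}}$ at $s_1$ with probability one. It therefore visits $s_{\mathrm{mid}}$ and is unrealizable. Your closing remark also survives: on hardware this policy ends in $s_{\mathrm{stuck}}$ with return $0$, below the detour.

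The termination clause of Definition~\ref{def:realizable} must be read, as the paper's sketch implicitly does, as describing deployment in $\mathcal{M}_{\mathrm{real}}$ rather than as a modification of $p_{\mathrm{sim}}$. Otherwise the simulated shortcut would also terminate at $s_{\mathrm{mid}}$ and never collect the goal reward.
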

\begin{wrapfigure}{r}{0.45\textwidth}
    \centering
    \includegraphics[width=0.45\textwidth]{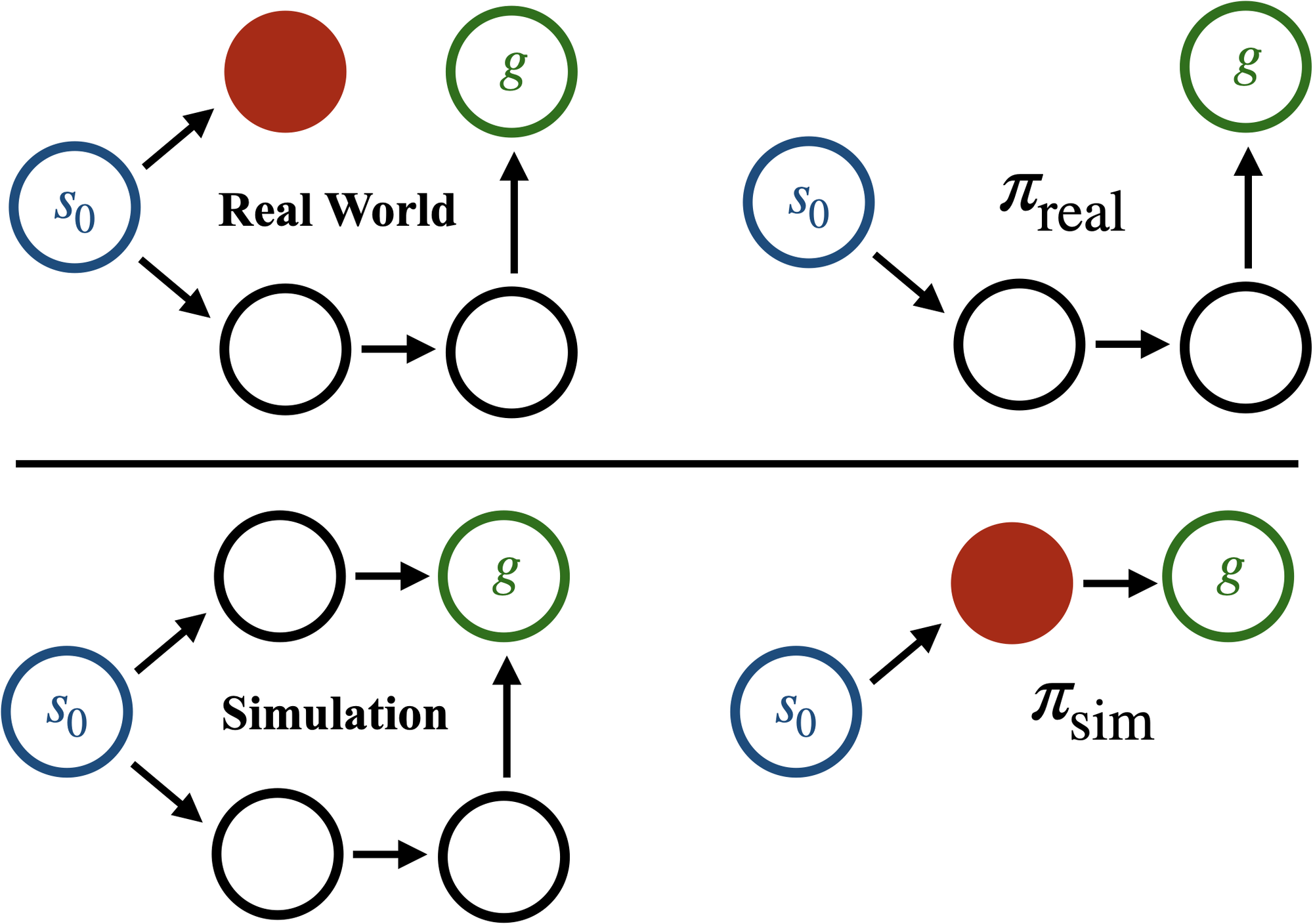}
    \vspace{-1.0em}
    \caption{\footnotesize Visual representation of Proposition~\ref{prop:unconstrained_failure}. While $\pi_\mathrm{real}$ successfully completes the task, $\pi_{\mathrm{sim}}$ exploits a transition that quickly leads to the goal in simulation, but causes the policy to get stuck when deployed in the real world. }
    \label{fig:unconstrained_discrete_mdp}
    \vspace{-0.08in}
\end{wrapfigure}

\textit{Proof Sketch}
We proceed by constructing a discrete MDP with 5 states, as shown in Fig.~\ref{fig:unconstrained_discrete_mdp}. In simulation, the shortest path to the goal passes through a state that is unrecoverable in the real world, resulting in complete failure upon deployment.

\subsection{Limitation of  Distributional Constraints}

If the goal is to find the optimal realizable policy in the real world, a natural approach is to constrain the policy improvement process in simulation to the space of realizable policies. To show that this actually results in the optimal realizable policy, we require an additional assumption: the ranking of \textit{realizable} policies is preserved between $\mathcal{M}_{\mathrm{sim}}$ and $\mathcal{M}_{\mathrm{real}}$. This is motivated by recent work showing that simulation is a reliable evaluator of real-world policies \cite{jain2025polarisscalablerealtosimevaluations}. 

\begin{assumption}[Simulation is a reliable evaluator of real world policies]\label{asm:ordering}
For any two \textbf{realizable} policies $\pi$ and $\pi'$,
\[
    J_{\mathcal{M}_{\mathrm{sim}}}(\pi) \geq J_{\mathcal{M}_{\mathrm{sim}}}(\pi')
    \implies
    J_{\mathcal{M}_{\mathrm{real}}}(\pi) \geq J_{\mathcal{M}_{\mathrm{real}}}(\pi').
\]

\end{assumption}

Under this assumption, the optimal simulation policy that is  \textit{\textbf{realizable}} in the real world must be the optimal policy in the real world (all real-world policies are realizable by definition). Thus, the off-domain policy optimization objective is:
\begin{equation}\label{eq:odpi_objective}
 \arg\max_{\pi_{\mathrm{sim}}} J_{\mathcal{M}_{\mathrm{sim}}}(\pi_{\mathrm{sim}}) \quad \text{s.t.} \quad \pi_{\mathrm{sim}} \text{ is realizable} 
\end{equation}

To actually train a policy in simulation subject to the realizability constraint, a natural approach is to regularize policy improvement against a base policy trained via real-world interaction, which is realizable by construction. In prior work, this is performed through by constraining policy improvement under a distributional distance $D$ from the real-world policy:
\begin{equation}
    \arg\max_{\pi} J_{\mathrm{sim}}(\pi) \quad \mathrm{s.t.} \quad D(\pi,\pi_{\mathrm{base}}) < \epsilon
\end{equation}
for some constant $\epsilon$. In practice, this is implemented through a behavioral cloning loss or a discriminator-based penalty~\cite{torne2024reconcilingrealitysimulationrealtosimtoreal, Peng_2021}.

Ensuring realizability under this type of regularization requires an additional assumption: the real-world prior's support must have a $\delta$-neighborhood of realizable actions, as formalized in Assumption~\ref{asm:neighborhood}. This holds whenever the prior has a margin of safety, which is usually the case, because it is difficult to teleoperate at the edge of a robot's capabilities without accidentally failing.

\begin{assumption}[Realizability neighborhood]\label{asm:neighborhood}
There exists $\delta > 0$ such that for every state $s$ and every action $a \in \text{supp}(\pi_{\mathrm{base}}(\cdot|s))$, the pair $(s, a')$ is realizable for all $a'$ with $\|a' - a\| \leq \delta$.
\end{assumption}

\begin{wrapfigure}{r}{0.45\textwidth}
    \centering
    \includegraphics[width=0.45\textwidth]{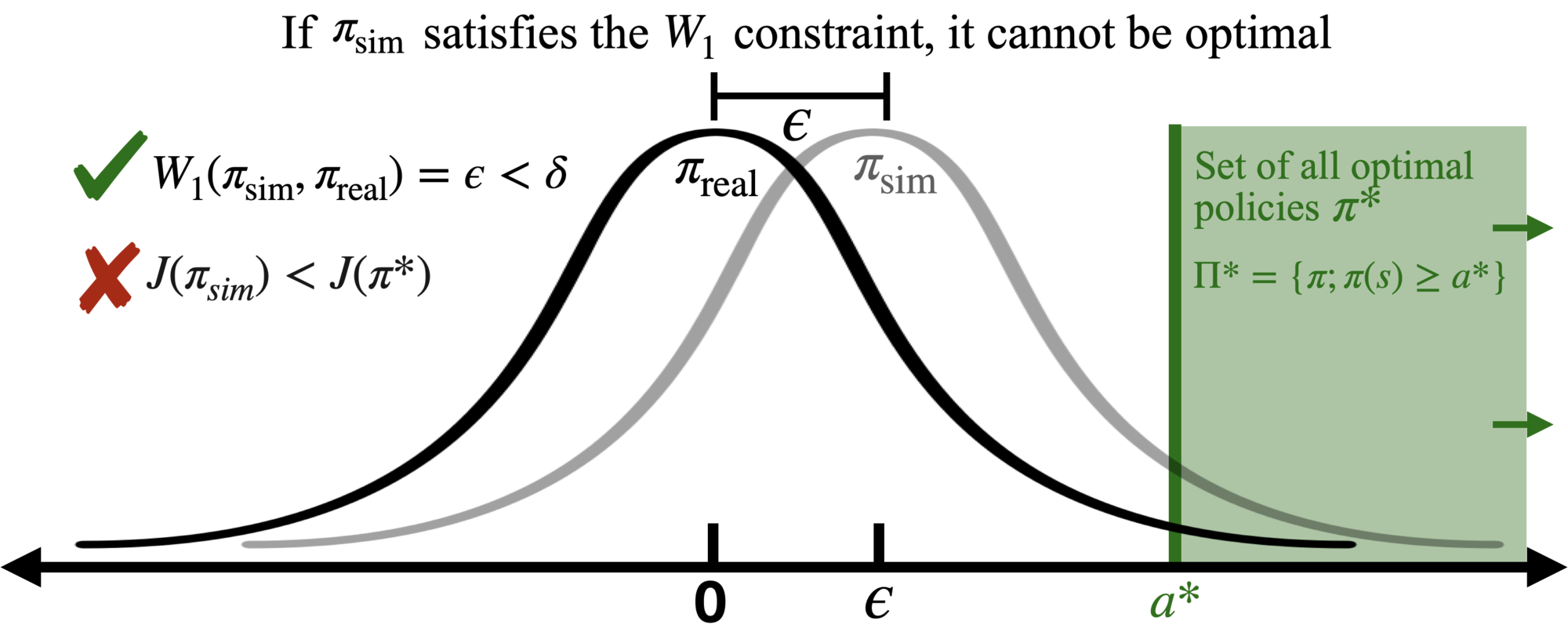}
    \vspace{-2.0em}
    \caption{\footnotesize Visual representation of Proposition~\ref{prop:dist_constraint_failure_appendix}. $\pi$ adds a residual of $\epsilon$ to $\pi_\mathrm{real}$, but this is not sufficient to recover the optimal policy}
    \label{fig:wasserstein_failure_1d}
    \vspace{-0.08in}
\end{wrapfigure}

For a small enough $\epsilon$ and large enough $\delta$,  distributional constraints ensure realizability. In practice, however, too much regularization prevents meaningful improvement, while too little allows the policy to exploit the dynamics gap. In many settings, there is no level of regularization that enables both optimality and realizability. Below, we provide a simple proof of this for the Wasserstein-1 distance, illustrated in Fig.~\ref{fig:wasserstein_failure_1d}.

\begin{proposition}[Distributional Constraints Can Preclude Optimal Improvement]\label{prop:dist_constraint_failure_appendix}
There exist MDPs $\mathcal{M}_{\mathrm{sim}}$, $\mathcal{M}_{\mathrm{real}}$, and a prior $\pi_{\mathrm{base}}$ such that the optimal realizable policy $\pi^*$ satisfies $\mathrm{supp}(\pi^*(\cdot|s)) \subseteq \mathrm{supp}(\pi_{\mathrm{base}}(\cdot|s))$ for all $s$, yet $\pi^*$ cannot be recovered by any policy satisfying $W_1(\pi(\cdot|s), \pi_{\mathrm{base}}(\cdot|s)) \leq \epsilon$. 
\end{proposition}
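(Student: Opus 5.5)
This is an existence claim, so I will prove it by explicit construction. I use a one-step (bandit-like) MDP, since a single decision state isolates the action distribution. Take a single state $s_1$ with action space $\mathcal{A} = [0,1] \subset \mathbb{R}$; every episode terminates after one step, so the return is just $r(s_1,a)$. Choose the base policy to be bimodal, $\pi_{\mathrm{base}}(\cdot\mid s_1) = (1-p)\,\delta_{0} + p\,\delta_{1}$, with a small weight $p$ on the good mode. Set the reward to $r(s_1,1)=1$ and $r(s_1,a)=0$ elsewhere; for a smoother picture, use $r(s_1,a)=a$ restricted to realizable actions.

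Next I fix the dynamics so that realizability is exactly what is needed. Let $\mathcal{M}_{\mathrm{sim}}$ and $\mathcal{M}_{\mathrm{real}}$ coincide except that actions in an interval $(\delta, 1-\delta)$ are unrealizable in the real world: they lead to a stuck state, which under Definition~\ref{def:realizable} gives zero reward. In simulation these actions stay harmless. The realizable set then contains the $\delta$-neighborhoods of $\mathrm{supp}(\pi_{\mathrm{base}}) = \{0,1\}$, so Assumption~\ref{asm:neighborhood} holds. The two MDPs agree on realizable actions, so Assumption~\ref{asm:ordering} holds trivially as well.

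With this setup, the optimal realizable policy is $\pi^* = \delta_1$, which earns reward $1$, the maximum. Its support $\{1\}$ lies inside $\mathrm{supp}(\pi_{\mathrm{base}})$, as the statement requires.

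The remaining step is the $W_1$ lower bound. I compute $W_1(\delta_1, \pi_{\mathrm{base}}) = (1-p)\cdot 1 = 1-p$, since the mass $1-p$ at $0$ has to travel distance $1$. For any $\epsilon < 1-p$, no policy in the $W_1$ ball can equal $\pi^*$. To make the failure quantitative, I also bound the value reachable inside the ball, which is where a little care is needed. Any $\pi$ with $W_1(\pi,\pi_{\mathrm{base}})\le\epsilon$ can move at most mass $\epsilon/(1-2\delta)$ from near $0$ across to the region $[1-\delta,1]$. Moving mass from the $\delta$-neighborhood of $0$ into the realizable region near $1$ costs at least $1-2\delta$ per unit of mass. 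Hence the probability of landing near $1$ is at most $p + \epsilon/(1-2\delta)$, which stays strictly below $1$ for small $\epsilon$. This shows that both the real-world and simulated returns of the constrained optimum are strictly below $J_{\mathrm{real}}(\pi^*)$.

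I also want to point out the tradeoff in the figure. If $\epsilon \ge 1-p$, the simulated optimum within the ball can instead place mass in the unrealizable gap. To make this bite, I give the simulator a spurious high reward at $a=1/2$; then large $\epsilon$ yields an unrealizable policy, as in Proposition~\ref{prop:unconstrained_failure}. Strictly speaking, this step is optional for the statement, which only needs the small-$\epsilon$ failure.

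The main obstacle is the transport lower bound on the mass that can shift between modes. I would handle it by applying Kantorovich–Rubinstein duality with the 1-Lipschitz test function $f(a)=\min(1,\max(0,(a-\delta)/(1-2\delta)))\cdot(1-2\delta)$.
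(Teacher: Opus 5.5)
Your construction is correct, but it takes a different route from the paper.

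\textbf{The paper's route.} The paper takes $\mathcal{A}=\mathbb{R}$, $\pi_{\mathrm{base}}=\mathcal{N}(0,1)$, and the threshold reward $\mathbf{1}\{a\ge a^*\}$ with $a^*>\epsilon$. Support inclusion is then automatic because the Gaussian has full support. One application of Kantorovich--Rubinstein duality with $f(x)=x$ shows that every policy in the $W_1$ ball has $|\mathbb{E}_\pi[a]|\le\epsilon<a^*$. This excludes the entire set of optimal policies at once, for whatever $\epsilon$ is given.

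\textbf{Your route and what it buys.} You use a bimodal prior $(1-p)\delta_0+p\delta_1$. The optimum $\delta_1$ is then a genuine reweighting of an existing rare mode, so the support inclusion is non-vacuous. That matches the steering story better than the full-support Gaussian, under which every policy trivially satisfies the support constraint. The exact computation $W_1(\delta_1,\pi_{\mathrm{base}})=1-p$, together with uniqueness of the optimum, already settles the claim. Your transport cap $p+\epsilon/(1-2\delta)$ is a correct optional strengthening. For the return bound, the paper's test function is enough: on $[0,1]$ you have $\pi(\{1\})\le\mathbb{E}_\pi[a]\le p+\epsilon$.

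\textbf{Two small caveats.}
\begin{itemize}
\item On $[0,1]$ every $W_1$ distance is at most $1$, so your construction only works for $\epsilon<1-p$. If $\epsilon$ is a given constant, as in the paper (where $a^*$ is chosen after $\epsilon$), place the good mode at $L$ with $(1-p)L>\epsilon$.
\item Your realizability dressing does not fit the paper's visitation-based definition of realizability. In a one-state, one-step MDP, every pair $(s_1,a)$ is visited by some policy, so nothing in $(\delta,1-\delta)$ is unrealizable. Also, one-step returns ignore transitions, so $J_{\mathrm{sim}}=J_{\mathrm{real}}$ unless you change the reward, which the shared-reward setup forbids, or add a second step.
\end{itemize}
Neither caveat affects the main argument. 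Since $\delta_1$ is optimal among all policies, it is the optimal realizable policy under any reading.
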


\textit{Proof Sketch.}
Let $H=1$, $\mathcal{S}=\{s_0\}$, $\mathcal{A}=\mathbb{R}$, $\pi_{\mathrm{base}}=\mathcal{N}(0,1)$, 
and fix any $a^* > \epsilon > 0$. Let $r(s_0, \cdot)$ be a reward function with reward 1 for all actions $a \geq a^*$, and reward 0 otherwise. The set of optimal policies $\Pi^*$ is then those whose action distribution at $s_0$ places all of its mass on $[a^*,\infty)$. Since $\mathcal{N}(0,1)$ has full support on $\mathbb{R}$, every $\pi^*\in\Pi^*$ satisfies $\mathrm{supp}(\pi^*(\cdot|s_0)) \subseteq \mathrm{supp}(\pi_{\mathrm{base}})$, so the realizability condition holds.

Let $\pi$ be an arbitrary policy satisfying $W_1(\pi(\cdot|s), \pi_{\mathrm{base}}(\cdot|s)) \leq \epsilon$. By Kantorovich-Rubinstein duality:
\[
W_1(\pi, \pi_{\mathrm{base}}) = \sup_{\|f\|_{\text{Lip}} \leq 1} \left| \mathbb{E}_{X\sim\pi}[f(X)] - \mathbb{E}_{Y\sim \mathcal{N}(0,1)}[f(Y)] \right|
\]
With $f(x)=x$, which is 1-Lipschitz:
\[
|\mathbb{E}_\pi[X] - \mathbb{E}_{\mathcal{N}(0,1)}[X]| \leq W_1(\pi, \pi_{\mathrm{base}}) \leq \epsilon
\]

Since $\mathbb{E}_{\mathcal{N}(0,1)}[X]=0$, $|\mathbb{E}[\pi]| \leq \epsilon$ for any policy that satisfies the constraint. $\min_{\pi^* \in \Pi^*} |\mathbb{E}[\pi^*]| = a^* > \epsilon$, so there is no optimal policy $\pi^*$  satisfying the constraint  $W_1(\pi^*(\cdot|s), \pi_{\mathrm{base}}(\cdot|s)) \leq \epsilon$.

\clearpage
\subsection{Support Constraints Enable Transferable Policy Improvement }
\begin{wrapfigure}{r}{0.45\textwidth}
    \centering
    \includegraphics[width=0.45\textwidth]{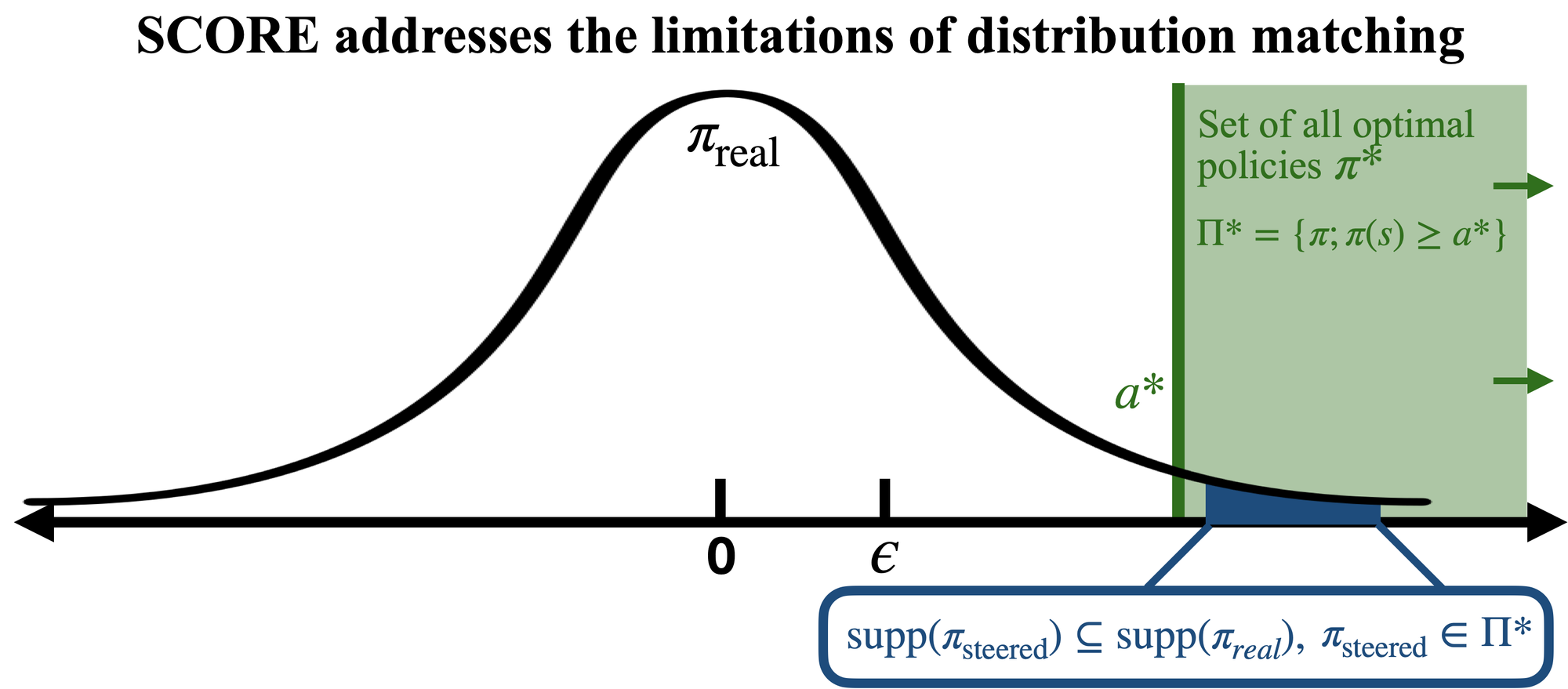}
    \vspace{-0.5em}
    \caption{\footnotesize Visual representation of how \ours{} addresses the limitations of distributional constraints shown in Fig.~\ref{fig:wasserstein_failure_1d}. By steering within the support of $\pi_{real}$, \ours{} is able to find an optimal}
    \vspace{-0.08in}
\end{wrapfigure}

The problem with distributional constraints has been extensively studied in
offline RL, where policy optimization must avoid drifting toward dangerous actions outside of a behavioral prior. To address this, prior work has
proposed \textit{support constraints}
\cite{singh2022offlinerlrealisticdatasets,
mao2023supportedtrustregionoptimization,
zhang2026reformreflectedflowsonsupport}, which require only that the learned
policy remain within the support of the behavior policy, rather than matching its
full action distribution. To apply support constraints in our setting, we must make one more assumption: 

\begin{assumption}[Realizability is closed under support]\label{asm:support_closed}
If $\pi$ is realizable and $\pi'(\cdot|s) \ll \pi(\cdot|s)$ for all $s$, then $\pi'$ is also realizable.
\end{assumption}

Intuitively, this says that a real-world prior cannot be reweighted to reach an unrealizable pair $(s,a)$. As in Assumption~\ref{asm:neighborhood}, we find that this holds in practice as long as there is a margin of safety around the data collected during teleoperation, and the dataset is large enough to account for stochasticity in the environment. Since policies within the support of a realizable policy are themselves realizable, and improvement in simulation of realizable policies guarantees improvement in the real world, we can conclude that support-constrained RL in simulation recovers the optimal policy supported by the base policy in the real world.

The proposition below states this guarantee for the idealized support-constrained case, which exactly corresponds to pure latent steering. Our RFS implementation is slightly more general because it adds a residual action after sampling from the base flow. Under Assumption~\ref{asm:neighborhood}, if this residual is bounded by the realizability margin, then RFS remains within a realizable neighborhood of the base-policy support. Thus, RFS can be interpreted as optimizing over a $\delta$-expanded support class, while pure latent steering corresponds to the special case $\delta=0$.

\begin{proposition}[Support-constrained RL recovers the in-support optimum]\label{prop:support_optimal}
Let $\pi^*_{\mathrm{base}} = \arg\max_{\pi \ll \pi_{\mathrm{base}}} J_{\mathcal{M}_{\mathrm{real}}}(\pi)$. Any support-constrained RL procedure returning $\pi_{\mathrm{supp}} = \arg\max_{\pi \ll \pi_{\mathrm{base}}} J_{\mathcal{M}_{\mathrm{sim}}}(\pi)$ yields a realizable policy with $J_{\mathcal{M}_{\mathrm{real}}}(\pi_{\mathrm{supp}}) \geq J_{\mathcal{M}_{\mathrm{real}}}(\pi^*_{\mathrm{base}})$.
\end{proposition}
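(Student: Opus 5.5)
The argument has two parts: first establish that every policy in the feasible class $\Pi_{\mathrm{base}} := \{\pi : \pi(\cdot|s) \ll \pi_{\mathrm{base}}(\cdot|s)\ \forall s\}$ is realizable, and then transfer the simulation-optimality of $\pi_{\mathrm{supp}}$ to the real world using Assumption~\ref{asm:ordering}.

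\textbf{Step 1 (realizability of the base policy).} $\pi_{\mathrm{base}}$ was learned from real-world interaction and is executed in $\mathcal{M}_{\mathrm{real}}$. Every pair $(s,a)$ it visits with positive probability therefore has $\rho^{\pi_{\mathrm{base}}}(s,a)>0$ under $\mathcal{M}_{\mathrm{real}}$, so it is realizable by Definition~\ref{def:realizable}. One subtlety: the definition of a realizable policy refers to its \emph{simulation} visitation support. I will argue that this support is contained in the set of pairs $(s,a)$ with $a\in\mathrm{supp}(\pi_{\mathrm{base}}(\cdot|s))$ at states reachable in the real world. If this containment cannot be derived directly, I will take it as the intended reading of ``$\pi_{\mathrm{base}}$ is realizable by construction'' stated just before Assumption~\ref{asm:neighborhood}.

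\textbf{Step 2 (closure of the feasible class).} Applying Assumption~\ref{asm:support_closed} with $\pi=\pi_{\mathrm{base}}$ shows that every $\pi'\in\Pi_{\mathrm{base}}$ is realizable. In particular, both $\pi_{\mathrm{supp}}$ and $\pi^*_{\mathrm{base}}$ are realizable, which already gives the first claim of Proposition~\ref{prop:support_optimal}.

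\textbf{Step 3 (transfer of optimality).} Since $\pi_{\mathrm{supp}}$ maximizes $J_{\mathcal{M}_{\mathrm{sim}}}$ over $\Pi_{\mathrm{base}}$ and $\pi^*_{\mathrm{base}}\in\Pi_{\mathrm{base}}$, we have $J_{\mathcal{M}_{\mathrm{sim}}}(\pi_{\mathrm{supp}})\ge J_{\mathcal{M}_{\mathrm{sim}}}(\pi^*_{\mathrm{base}})$. Both policies are realizable by Step 2, so Assumption~\ref{asm:ordering} gives $J_{\mathcal{M}_{\mathrm{real}}}(\pi_{\mathrm{supp}})\ge J_{\mathcal{M}_{\mathrm{real}}}(\pi^*_{\mathrm{base}})$. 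Combined with the optimality of $\pi^*_{\mathrm{base}}$ within the class, equality in fact holds.

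Existence of the two argmaxes I will take as given, as the statement presupposes. Otherwise the argument can be run with $\varepsilon$-optimal policies, but the strict ordering assumption does not give a clean limit.

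\textbf{Main obstacle.} The delicate point is Step 1: making precise why the simulation rollouts of $\pi_{\mathrm{base}}$ stay within real-realizable pairs, since simulation dynamics could in principle reach states the real system never visits. I will first try to absorb this into the convention of Definition~\ref{def:realizable}: unrealizable pairs yield zero reward and terminate, so any excursion to them only affects returns consistently. If that does not suffice, I will state explicitly that ``$\pi_{\mathrm{base}}$ is realizable'' is the premise under which Assumption~\ref{asm:support_closed} is invoked.
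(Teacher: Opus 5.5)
Your proposal is correct and follows the paper's proof essentially verbatim: Assumption~\ref{asm:support_closed}, applied with $\pi=\pi_{\mathrm{base}}$ (whose realizability the paper likewise takes ``by construction''), makes both $\pi_{\mathrm{supp}}$ and $\pi^*_{\mathrm{base}}$ realizable, and Assumption~\ref{asm:ordering} then lifts $J_{\mathcal{M}_{\mathrm{sim}}}(\pi_{\mathrm{supp}})\ge J_{\mathcal{M}_{\mathrm{sim}}}(\pi^*_{\mathrm{base}})$ to the real world. Your extra remark is also correct: since $\pi_{\mathrm{supp}}\ll\pi_{\mathrm{base}}$ is feasible for the real-world argmax, the inequality is in fact an equality.
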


\textit{Proof Sketch.}
By Assumption~\ref{asm:support_closed}, $\pi_{\mathrm{supp}} \ll \pi_{\mathrm{base}}$ is realizable; likewise $\pi^*_{\mathrm{base}} \ll \pi_{\mathrm{base}}$ is realizable. Both being realizable, Assumption~\ref{asm:ordering} applies. Optimality in $\mathcal{M}_{\mathrm{sim}}$ gives $J_{\mathcal{M}_{\mathrm{sim}}}(\pi_{\mathrm{supp}}) \geq J_{\mathcal{M}_{\mathrm{sim}}}(\pi^*_{\mathrm{base}})$, which Assumption~\ref{asm:ordering} lifts to $J_{\mathcal{M}_{\mathrm{real}}}(\pi_{\mathrm{supp}}) \geq J_{\mathcal{M}_{\mathrm{real}}}(\pi^*_{\mathrm{base}})$.

\end{document}